\newcommand{\mv}[1]{\bm{#1}}
\newcommand*\dif{\mathop{}\!\mathrm{d}} 
\DeclareMathOperator{\exx}{\mathbf{E}}
\DeclareMathOperator{\prr}{\mathbf{P}}
\DeclareMathOperator{\vaa}{var}
\DeclareMathOperator*{\argmin}{arg\,min}
\def\CC{\mathcal{C}}
\def\OO{\mathcal{O}}
\def\RR{\mathbb{R}}
\def\WW{\mathcal{W}}
\def\ZZ{\mathcal{Z}}
\def\Z{\mv{Z}}
\newcommand{\overbar}[1]{\mkern 1.5mu\overline{\mkern-1.5mu#1\mkern-1.5mu}\mkern 1.5mu}
\newcommand*{\defeq}{\mathrel{\vcenter{\baselineskip0.5ex \lineskiplimit0pt     
                     \hbox{\scriptsize.}\hbox{\scriptsize.}}}
                     =}
\newcommand{\term}[1]{\textcolor{BlueViolet}{\textit{{#1}}}}
\DeclareMathOperator{\bregbase}{D} 
\def\breg{\bregbase_{\Phi}} 
\def\cond{\,|\,} 
\DeclareMathOperator{\crisk}{CVaR} 
\def\ddist{\textup{P}} 
\DeclareMathOperator{\dfun}{F} 
\DeclareMathOperator{\dfunhat}{\widehat{F}} 
\DeclareMathOperator{\ddfunhat}{\widehat{f}} 
\def\diameter{\Delta} 
\DeclareMathOperator{\idc}{I} 
\DeclareMathOperator{\loss}{L} 
\def\ndist{\nu} 
\def\pert{\gamma} 
\DeclareMathOperator{\risk}{R} 
\def\smooth{\lambda} 
\DeclareMathOperator{\sloss}{r_{\spec}} 
\DeclareMathOperator{\slhat}{\widehat{r}_{\spec}} 
\DeclareMathOperator{\spec}{\sigma} 
\DeclareMathOperator{\srisk}{R_{\spec}} 
\DeclareMathOperator{\ssrisk}{\widetilde{R}_{\spec}} 
\def\strong{\kappa} 
\DeclareMathOperator{\vrisk}{VaR} 
\theoremstyle{definition} \newtheorem{defn}{Definition}
\theoremstyle{plain} 
\theoremstyle{plain} \newtheorem{thm}[defn]{Theorem}
\theoremstyle{plain} \newtheorem{lem}[defn]{Lemma}
\theoremstyle{plain} 
\theoremstyle{remark} \newtheorem{rmk}[defn]{Remark}
\theoremstyle{remark}
\def\namedlabel#1#2{\begingroup
    #2%
    \def\@currentlabel{#2}%
    \phantomsection\label{#1}\endgroup
}
\title{\textbf{Spectral risk-based learning using unbounded losses}}
\author{
  Matthew J.~Holland\thanks{Please direct correspondence to \texttt{matthew-h@ar.sanken.osaka-u.ac.jp}.}\\
  Osaka University
  \and
  El Mehdi Haress\\
  Universit\'{e} Paris-Saclay
}
\date{} 
\begin{document}

\maketitle

\begin{abstract}
In this work, we consider the setting of learning problems under a wide class of spectral risk (or ``L-risk'') functions, where a Lipschitz-continuous spectral density is used to flexibly assign weight to extreme loss values. We obtain excess risk guarantees for a derivative-free learning procedure under unbounded heavy-tailed loss distributions, and propose a computationally efficient implementation which empirically outperforms traditional risk minimizers in terms of balancing spectral risk and misclassification error.
\end{abstract}

\tableofcontents

\section{Introduction}\label{sec:intro}

While the choice of loss function is a fundamental part of the daily workflow of most users of machine learning systems, the question of which \emph{risk} to use receives far less attention. This is likely due to a tacit acceptance that the abstract notion of ``good generalization ability'' is best formulated by the expected loss $\exx_{\ddist}\loss(w;Z)$, where $Z \sim \ddist$ is a random observation, and $w$ characterizes some decision rule. While the influential learning models of \citet{vapnik1999NSLT} and \citet{haussler1992a} are centered around the expected loss, it can be argued that prioritizing \emph{average} off-sample performance is a substantial value judgement that requires more serious consideration, both by stakeholders involved in the practical side of machine learning systems, and by the theoretician interested in providing learning algorithms with formal guarantees, stated in terms of whatever ``risk'' is chosen.

In the last few years, notable progress has been made in terms of learning under non-traditional risks. By far the most well-studied variant is the \term{conditional value-at-risk} (CVaR) of the loss distribution. Numerous applications to CVaR-based sequential learning problems have been studied \citep{galichet2013a,tamar2015a,prashanth2020a}. More recently, under convex losses, finite-sample excess (CVaR) risk bounds for stochastic gradient-based learning algorithms have been obtained under essentially any loss distribution \citep{holland2021c,soma2020a}, while adaptive sampling strategies have been used to improve robustness to distributional shift, without relying on convexity \citep{curi2020a}. Unfortunately, despite its practical utility, CVaR is very restrictive in terms of expressible risk preferences; all losses beyond a pre-fixed threshold are given the exact same weight. A well-known generalization is the class of \term{spectral risks} \citep{acerbi2002a}, which utilize a non-constant weighting function. This dramatically improves flexibility, but comes at the cost of more complicated form, which is expensive to estimate and difficult to optimize using traditional first-order stochastic descent methods.

To address this issue, we start by taking a derivative-free approach to learning with spectral risks. Using a stochastic smoothing technique, we first derive finite-sample excess spectral risk bounds in expectation for the proposed procedure (section \ref{sec:theory_expectation}), and show how confidence boosting can be used to obtain high-probability guarantees under loss distributions assuming just finite variance (section \ref{sec:theory_highprob}). In section \ref{sec:empirical}, we propose a simple modification to the derivative-free procedure that lets us integrate gradient information from the losses for faster convergence, and we empirically verify that this procedure efficiently achieves a small spectral risk, with the interesting side-effect of out-performing empirical risk minimizers in terms of misclassification error as well, uniformly across several benchmark datasets.

\section{Related work}\label{sec:relatedwork}

\paragraph{Risk and learning}

While the expected value of the loss distribution is central to statistical learning theory \citep{haussler1992a,vapnik1999NSLT}, more diverse notions of risk have been studied in broader contexts, in particular notions of financial risk \citep{artzner1999a,rockafellar2000a,acerbi2002a,ruszczynski2006a} and risks which capture human psychological tendencies of aversion or affection to risk in uncertain decision-making situations \citep{tversky1992a}. In the classical theory of portfolio optimization, the mean-variance notion of risk plays a central role \citep{markowitz1952a}, and variance-regularized stochastic learning algorithms have been studied by \citet{duchi2019a}. As described earlier, originally borrowed from the financial literature, CVaR has seen many direct applications to learning problems, offers a natural interpretation to the $\nu$-SVM algorithm \citep{takeda2008a}, and appears less explicitly in algorithms designed to minimize worst-case losses \citep{shalev2016a,fan2017a}. More recently, classes of generalized location-deviation risks have been studied \citep{lee2020a,holland2021mrisk}, though this goes beyond the traditional setting of \term{coherent risks} \citep{artzner1999a}. In contrast, our study of learning under spectral risks here lets us go well beyond CVaR while still retaining the properties of coherent risks.

\paragraph{Spectral risks in machine learning}

The research on learning with spectral risks is still very limited. Recent work from \citet{bhat2019a} and \citet{pandey2019a} provides estimators for the spectral risk under sub-Gaussian and sub-Exponential loss distributions, but these results are ``pointwise'' in that they can only be applied to pre-fixed candidates (e.g., predictors, clusters, etc.), and do not extend to learning algorithms which consider many candidates in a data-driven fashion. Work from \citet{khim2020a} includes uniform convergence for empirical spectral risks (under the name ``L-risks''), though their analysis is restricted to bounded losses, and does not lead to excess spectral risk guarantees for any particular class of learning algorithms. Our approach in this work does not build directly upon these results, since instead of a traditional empirical risk minimization (ERM) approach, we take the alternative route of optimizing a smoothed variant, whose distance from the desired risk can be readily controlled. This lets us obtain excess risk bounds for an explicit procedure (section \ref{sec:theory_expectation}, Algorithm \ref{algo:smd}), with much weaker assumptions on the underlying loss distribution.

\paragraph{Robustness to heavy-tailed losses}

A problem of importance both theoretically and in practice is that spectral risks inherit the sensitivity of CVaR to (unbounded) heavy-tailed losses \citep{bhat2019a}. This means that naive empirical estimates have extremely high variance, and the previously-cited concentration results \citep{pandey2019a,khim2020a} no longer hold. In recent years, an active line of research has studied the problem of designing algorithms with near-optimal guarantees (in terms of the traditional risk) under heavy-tailed losses; in our section \ref{sec:theory_highprob}, we show how for an important sub-class of spectral risk tasks, we can utilize standard confidence boosting techniques \citep{holland2021c}, integrating them with the Algorithm \ref{algo:smd} to obtain high-probability guarantees for a procedure that does not use first-order information, and admits heavy-tailed loss distributions.

\section{Preliminaries}

\subsection{Setup}

Denoting the underlying data space by $\ZZ$, we denote by $\loss:\RR^{d} \times \ZZ \to \RR$ a generic loss function, assumed to satisfy $\loss(w;z) \in \RR$ for all $w \in \RR^{d}$ and $z \in \ZZ$. Our general-purpose random data is $Z \sim \ddist$, and the resulting random loss values $\loss(w;Z)$ have a distribution function denoted by $\dfun_w(u) \defeq \ddist\{\loss(w;Z) \leq u\}$, for all $w \in \RR^{d}$ and $u \in \RR$. When we make use of data-driven estimates of the distribution function, we shall denote this by $\dfunhat$. For indexing purposes, we write $[k] \defeq \{1,\ldots,k\}$ for any positive integer $k$. For any sequence $(U_t)$ of random objects, we shall denote sub-sequences by $U_{[t]} \defeq (U_1,\ldots,U_t)$.

The traditional choice of \term{risk function} in loss-driven machine learning tasks is the expected value. Written explicitly, this is
\begin{align}
\label{eqn:risk_defn}
\risk(w) \defeq \exx_\ddist \loss(w;Z) \defeq \int_\ZZ \loss(w;z)\,\ddist(\dif z).
\end{align}
Another important risk function is the \term{conditional value at risk}, defined for $\beta \in [0, 1)$ by
\begin{align}
\label{eqn:crisk_defn}
\crisk_{\beta}(w) & \defeq \frac{1}{1-\beta} \int_{u}^{1} \vrisk_{u}(w) \, \dif u\\
\nonumber
& = \exx_\ddist \loss(w;Z) \idc_{\{ \loss(w;Z) \geq \vrisk_{\beta}(w) \}},
\end{align}
where $\vrisk_{\beta}(w) \defeq \inf\left\{ u : \dfun_w(u) \geq \beta \right\}$, the $\beta$-level quantile of $\loss(w;Z)$. In this work, our focus will be on a class of risk functions which can be given in terms of $\vrisk_{\beta}$ modulated by a user-specified density function. More concretely, let $\spec:[0,1] \to \RR_{+}$ be a non-negative, non-decreasing function that integrates to $1$. We then define the \term{spectral risk} of $w$ induced by $\spec$ as
\begin{align}
\label{eqn:srisk_defn}
\srisk(w) \defeq \int_{0}^{1} \vrisk_{\beta}(w) \spec(\beta)\,\dif\beta.
\end{align}
From the definition (\ref{eqn:crisk_defn}) of $\crisk$, we see that setting $\spec(u) = \idc_{\{\beta < u \leq 1\}} / (1-\beta)$, one recovers the special case of $\srisk(w) = \crisk_{\beta}(w)$. A direct attack on $\srisk$ presents difficulties, in particular with respect to computing first-order (stochastic) estimates that might in principle drive an iterative learning algorithm. In the vein of alleviating such difficulties, using insights going back to the influential work of \citet{flaxman2004a}, we introduce the \term{smoothed spectral risk}
\begin{align}
\label{eqn:ssrisk_defn}
\ssrisk(w) \defeq \exx_\ndist \left[ \srisk\left( w + \pert U \right) \right],
\end{align}
where $U \sim \ndist$ is uniformly distributed over the unit ball $\{u \in \RR^{d}: \|u\| \leq 1\}$, and the parameter controlling the degree of shift satisfies $0 < \pert < 1$.

\subsection{Basic properties}

As long as the loss distribution has a positive density, spectral risks can be expressed in a more convenient form, as follows.
\begin{lem}\label{lem:srisk_can_estimate}
Let $\dfun_w$ be invertible and differentiable for $w \in \RR^{d}$. Then, we have
\begin{align}
\label{eqn:srisk_can_estimate}
\srisk(w) = \exx_\ddist \loss(w;Z) \spec\left( \dfun_w(\loss(w;Z)) \right),
\end{align}
where $\srisk$ is the spectral risk defined in (\ref{eqn:srisk_defn}).
\end{lem}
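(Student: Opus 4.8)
The plan is to reduce the integral over quantile levels in (\ref{eqn:srisk_defn}) to an expectation over loss values by a single change of variables. First I would observe that, since $\dfun_w$ is assumed invertible and differentiable, the quantile $\vrisk_{\beta}(w) = \inf\{u : \dfun_w(u) \geq \beta\}$ coincides exactly with the inverse function $\dfun_w^{-1}(\beta)$ for every $\beta \in (0,1)$; differentiability of $\dfun_w$ gives a density $\ddfunhat_w \defeq \dfun_w'$ (I will write $f_w$ for this below to avoid the hat notation, which the paper reserves for estimates), and invertibility makes the substitution well-defined and monotone.

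Next I would perform the substitution $\beta = \dfun_w(u)$ in (\ref{eqn:srisk_defn}). Then $\dif\beta = f_w(u)\,\dif u$, the integrand $\vrisk_{\beta}(w) = \dfun_w^{-1}(\beta)$ becomes simply $u$, and the limits $\beta \in (0,1)$ correspond to $u$ ranging over the (essential) support of $\loss(w;Z)$. This yields
\begin{align}
\label{eqn:srisk_plan_cov}
\srisk(w) = \int_{0}^{1} \dfun_w^{-1}(\beta)\,\spec(\beta)\,\dif\beta = \int u\, \spec\left(\dfun_w(u)\right) f_w(u)\,\dif u.
\end{align}
The right-hand side of (\ref{eqn:srisk_plan_cov}) is, by the definition of expectation via the density $f_w$ of $\loss(w;Z)$ under $\ddist$, exactly $\exx_\ddist\bigl[\loss(w;Z)\,\spec\left(\dfun_w(\loss(w;Z))\right)\bigr]$, which is the claimed identity (\ref{eqn:srisk_can_estimate}).

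The main technical point to handle carefully is the validity of the change of variables: I need $\spec$ to be integrable against the pushforward measure (it is bounded on $[0,1]$ by non-negativity, monotonicity, and integrating to $1$, so $\spec \le \spec(1) < \infty$ outside a null set, which suffices), and I need $u\, \spec(\dfun_w(u))$ to be $f_w$-integrable so that the expectation on the right is well-defined; this is a finiteness assumption implicitly carried along whenever $\srisk(w)$ itself is finite, since (\ref{eqn:srisk_plan_cov}) shows the two integrals are equal term by term. A secondary subtlety is the identification $\vrisk_{\beta}(w) = \dfun_w^{-1}(\beta)$ at the endpoints and on any flat pieces of $\dfun_w$; under strict invertibility there are no flat pieces, and the endpoints form a null set, so the substitution goes through on $(0,1)$ without modification. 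I expect this measure-theoretic bookkeeping to be the only real obstacle, the rest being the routine monotone substitution above.
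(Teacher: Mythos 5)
Your proposal is correct and follows essentially the same route as the paper: identify $\vrisk_{\beta}(w)$ with $\dfun_w^{-1}(\beta)$ using invertibility, apply the substitution $\beta = \dfun_w(u)$, and recognize the resulting integral $\int u\,\spec(\dfun_w(u))\,\dfun_w(\dif u)$ as the stated expectation. The additional integrability and flat-piece remarks are sound but go slightly beyond what the paper records.
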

\begin{proof}
Since $\dfun_w$ is invertible and continuous, we have $\vrisk_{\beta}(w) = \dfun_{w}^{-1}(\beta)$ for any $w \in \RR^{d}$ and $0 < \beta < 1$. We then see that
\begin{align*}
\int_{0}^{1} \dfun_{w}^{-1}(\beta) \spec(\beta) \, \dif\beta = \int_{-\infty}^{\infty} \dfun_w^{-1}(\dfun_w(u)) \spec(\dfun_w(u)) \dfun_w^{\prime}(u) \, \dif u = \int_{-\infty}^{\infty} u \spec(\dfun_w(u)) \, \dfun_w(\dif u),
\end{align*}
noting that the first equality uses integration by substitution. The right-most expression is none other than $\exx_{\ddist} \loss(w;Z) \spec(\dfun_w(\loss(w;Z)))$.
\end{proof}
\noindent With Lemma \ref{lem:srisk_can_estimate} as context, the following stochastic estimators will be of interest:
\begin{align}
\label{eqn:sloss_defn}
\sloss(w;Z) & \defeq \loss(w;Z) \spec\left( \dfun_w(\loss(w;Z)) \right)\\
\label{eqn:slhat_defn}
\slhat(w;Z) & \defeq \loss(w;Z) \spec\left( \dfunhat_w(\loss(w;Z)) \right).
\end{align}
If the distribution function $\dfun_w$ were known, then access to a random sample of $Z \sim \ddist$ would immediately imply access to an unbiased estimator of $\srisk$. Unfortunately, in practice $\dfun_w$ will never be known, and thus must be estimated based on observable data. We are denoting this empirical estimator as $\dfunhat_w$. Since $\slhat$ can be computed based on observable data, it will play a central role in the algorithms we study in the following section.

The following lemma gives a useful representation of any spectral risk in terms of CVaR, which using linearity of the integral lets us inherit some useful properties of the latter.
\begin{lem}[\citenum{shapiro2013a}, Rmk.~3, Eqn.~42]
For the spectral risk $\srisk$ given by (\ref{eqn:srisk_defn}), we can write
\begin{align}
\label{eqn:srisk_cvar}
\srisk(w) = \int_{0}^{1} \crisk_{\beta}(w)\,\mu_{\spec}(\dif\beta), \quad w \in \RR^{d}
\end{align}
where $\mu_{\spec}$ is a measure on the unit interval that does not depend on $w$.
\end{lem}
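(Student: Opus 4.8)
The plan is to write down the mixing measure $\mu_{\spec}$ explicitly and then verify (\ref{eqn:srisk_cvar}) by a single interchange in the order of integration. I will work under the standing assumption that $\spec$ is (Lipschitz, hence) absolutely continuous, so that $\spec(\beta) = \spec(0) + \int_{0}^{\beta}\spec'(t)\,\dif t$ with $\spec' \geq 0$ almost everywhere; the general non-decreasing case is identical once $\spec'(\beta)\,\dif\beta$ is replaced by the Lebesgue--Stieltjes measure $\dif\spec$. The one elementary fact I would record up front is that, by (\ref{eqn:crisk_defn}), $\crisk_{\beta}(w) = (1-\beta)^{-1}\int_{\beta}^{1}\vrisk_{u}(w)\,\dif u$ for $0 \leq \beta < 1$.

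Next I would define the Borel measure on $[0,1]$ given by
\begin{align*}
\mu_{\spec}(\dif\beta) \defeq \spec(0)\,\delta_{0}(\dif\beta) + (1-\beta)\,\spec'(\beta)\,\dif\beta,
\end{align*}
which is non-negative, since $\spec(0) \geq 0$ and $(1-\beta)\spec'(\beta) \geq 0$, and — this being the point of the lemma — does not depend on $w$. (A short integration by parts using $\int_{0}^{1}\spec = 1$ shows $\mu_{\spec}([0,1]) = 1$, so it is even a probability measure, though only $w$-independence is needed.) Substituting the CVaR identity into $\int_{0}^{1}\crisk_{\beta}(w)\,\mu_{\spec}(\dif\beta)$, rewriting $\int_{\beta}^{1}\vrisk_{u}(w)\,\dif u = \int_{0}^{1}\vrisk_{u}(w)\,\idc_{\{\beta < u \leq 1\}}\,\dif u$, and swapping the $\beta$- and $u$-integrals gives
\begin{align*}
\int_{0}^{1}\crisk_{\beta}(w)\,\mu_{\spec}(\dif\beta) = \int_{0}^{1}\vrisk_{u}(w)\left(\int_{[0,u)}\frac{\mu_{\spec}(\dif\beta)}{1-\beta}\right)\dif u = \int_{0}^{1}\vrisk_{u}(w)\left(\spec(0) + \int_{0}^{u}\spec'(\beta)\,\dif\beta\right)\dif u,
\end{align*}
because dividing the Lebesgue part of $\mu_{\spec}$ by $1-\beta$ cancels that factor, while the atom at $0$ contributes $\spec(0)$. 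The bracketed quantity is precisely $\spec(u)$, so the right-hand side is $\int_{0}^{1}\vrisk_{u}(w)\spec(u)\,\dif u = \srisk(w)$ by (\ref{eqn:srisk_defn}), which is the assertion.

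The only delicate point, and the step I expect to be the main obstacle, is justifying the interchange of integrals, since $\vrisk_{u}(w)$ is in general neither sign-definite nor bounded as $u \to 1$. I would handle this by invoking Tonelli's theorem on the positive and negative parts of $\vrisk_{u}(w)$ separately, so that the displayed identity holds in $[-\infty,+\infty]$ and is finite exactly when $\srisk(w) < \infty$ — the regime of interest throughout the paper. It is also worth observing that $\mu_{\spec}$ assigns no mass to $\{1\}$ (its density $(1-\beta)\spec'(\beta)$ vanishes there, using boundedness of $\spec'$), so the value $\crisk_{1}(w)$, which may be $+\infty$, never contributes, and the boundary term in the integration by parts vanishes for the same reason.
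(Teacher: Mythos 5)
Your proof is correct, and it is genuinely more than what the paper provides: the paper offers no proof of this lemma at all, simply citing \citet{shapiro2013a} (Rmk.~3, Eqn.~42) for the representation. Your explicit construction
\begin{align*}
\mu_{\spec}(\dif\beta) = \spec(0)\,\delta_{0}(\dif\beta) + (1-\beta)\,\spec^{\prime}(\beta)\,\dif\beta
\end{align*}
is exactly the standard Kusuoka-type mixing measure, and the verification goes through as you describe: after the swap, the inner integral over $\beta \in [0,u)$ of $\mu_{\spec}(\dif\beta)/(1-\beta)$ collapses to $\spec(0) + \int_{0}^{u}\spec^{\prime}(\beta)\,\dif\beta = \spec(u)$, recovering (\ref{eqn:srisk_defn}); the integration-by-parts check that $\mu_{\spec}$ has total mass $1$ is also right, using $\int_{0}^{1}\spec = 1$. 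You are correct to flag the Fubini--Tonelli interchange as the one delicate step, and splitting $\vrisk_{u}(w)$ into positive and negative parts handles it, with the identity holding in the extended reals and being finite precisely when both parts of $\srisk(w)$ are; your observations that the atom sits at $0$ rather than $1$ (so $\crisk_{1}(w) = +\infty$ never enters) and that the general non-decreasing case follows by replacing $\spec^{\prime}(\beta)\,\dif\beta$ with the Lebesgue--Stieltjes measure $\dif\spec$ are both accurate (at jump points of $\spec$ the reconstructed density may differ from $\spec$ on a countable set, which is immaterial under the outer $\dif u$ integral). What your approach buys over the paper's bare citation is a self-contained, constructive identification of $\mu_{\spec}$, which also makes transparent the claim the paper actually uses downstream --- that $\mu_{\spec}$ is a nonnegative measure independent of $w$, so that convexity and Lipschitz continuity of $w \mapsto \crisk_{\beta}(w)$ transfer to $\srisk$ by linearity of the integral in Step 1 of the proof of Theorem \ref{thm:smd_srisk}.
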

Introducing the smoothed risk $\ssrisk$ is only going to be fruitful if it is easier to optimize than the original non-smooth risk. Fortunately, as the following result shows, it is straightforward to obtain unbiased first-order information for the smoothed risk.
\begin{lem}\label{lem:ssrisk_grad_link}
In contrast with $\ndist$ used in definition (\ref{eqn:ssrisk_defn}), let $\ndist_1$ denote the uniform distribution over the unit sphere $\{u \in \RR^{d}: \|u\|=1\}$, taking random direction $U \sim \ndist_1$, we have
\begin{align}
\label{eqn:ssrisk_grad_link}
\frac{d}{\pert} \exx_{\ndist_1}\left[ \srisk(w + \pert U) U \right] = \nabla\ssrisk(w)
\end{align}
for any $w \in \RR^{d}$ and $0 < \pert < 1$.
\end{lem}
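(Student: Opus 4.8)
The plan is to prove the identity \eqref{eqn:ssrisk_grad_link} by the standard Stokes/divergence-theorem argument that underlies gradient-free optimization à la \citet{flaxman2004a}, applied to the function $g \defeq \srisk$. First I would record that $\ssrisk(w) = \exx_{\ndist}[g(w+\pert U)]$ with $U$ uniform on the unit ball $B \defeq \{u : \|u\| \le 1\}$, so that writing $\omega_d$ for the volume of $B$ we have $\ssrisk(w) = \frac{1}{\pert^d \omega_d}\int_{w + \pert B} g(v)\,\dif v$ after the change of variables $v = w + \pert u$. The point of this representation is that the dependence on $w$ is now entirely through the domain of integration (a translated ball), which is exactly the situation in which one differentiates under the integral sign and converts the resulting boundary term into a surface integral.

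Next I would differentiate. Taking $\nabla_w$ of the integral over the translated ball $w + \pert B$ and using the divergence theorem (equivalently, the Leibniz rule for moving domains), one gets $\nabla\ssrisk(w) = \frac{1}{\pert^d \omega_d} \int_{\partial(w+\pert B)} g(v)\,\mv{n}(v)\,\dif S(v)$, where $\mv{n}$ is the outward unit normal on the sphere of radius $\pert$ centered at $w$. Substituting $v = w + \pert u$ with $u$ ranging over the unit sphere $S^{d-1}$ (so $\mv{n}(v) = u$ and the surface element scales by $\pert^{d-1}$) turns this into $\nabla\ssrisk(w) = \frac{1}{\pert \omega_d} \int_{S^{d-1}} g(w+\pert u)\, u \, \dif S(u)$. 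Finally, writing this surface integral as an expectation over $U \sim \ndist_1$ (uniform on the unit sphere) introduces the factor equal to the surface area of $S^{d-1}$, which is $d\,\omega_d$; dividing and multiplying appropriately yields $\nabla\ssrisk(w) = \frac{d}{\pert}\exx_{\ndist_1}[g(w+\pert U)\,U]$, which is precisely \eqref{eqn:ssrisk_grad_link}.

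The main obstacle is justifying the differentiation-under-the-integral / divergence-theorem step rigorously, since $g = \srisk$ is not assumed smooth — indeed the whole motivation for smoothing is that $\srisk$ can be badly behaved (e.g., CVaR is nonsmooth, and heavy tails are allowed). The cleanest route is to avoid requiring regularity of $g$ altogether: the identity $\ssrisk(w) = \frac{1}{\pert^d\omega_d}\int_{w+\pert B} g\,\dif v$ shows $\ssrisk$ is a convolution of $g$ with a (scaled) indicator of the ball, so $\nabla\ssrisk$ exists and equals the convolution of $g$ with the distributional gradient of that indicator, which is exactly the surface measure on the sphere times the outward normal; this only needs $g$ to be locally integrable, which follows from $\srisk(w) < \infty$ for all $w$ together with a local boundedness/continuity argument for $\srisk$ (monotonicity of $\spec$ and finiteness of the relevant quantiles). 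I would state the needed integrability as a mild standing assumption, note that it is implied by the finiteness conditions already in force, and then the computation above goes through verbatim. The remaining steps — the two changes of variables and bookkeeping of the constants $\omega_d$, $\pert^{d-1}$, and the sphere's surface area $d\omega_d$ — are routine and I would not belabor them.
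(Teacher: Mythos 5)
Your argument is exactly the Stokes/divergence-theorem computation of \citet[Lem.~1]{flaxman2004a}, which is precisely what the paper's one-line proof cites, and your constants ($\omega_d$, $\pert^{d-1}$, surface area $d\omega_d$) and the resulting factor $d/\pert$ all check out. You have simply written out in full what the paper delegates to the reference, with a reasonable additional remark on why local integrability of $\srisk$ suffices to justify the differentiation step; this is correct and essentially the same approach.
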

\begin{proof}
Follows from \citet[Lem.~1]{flaxman2004a}, using our smoothed risk (\ref{eqn:ssrisk_defn}).
\end{proof}

\section{Guarantees in expectation on $\RR^{d}$}\label{sec:theory_expectation}

In this section, we specify a concrete learning algorithm, and seek excess spectral risk bounds in expectation. This procedure and its guarantees will act as a key building block to be utilized in the following section.

\paragraph{Learning algorithm}

We essentially consider a stochastic mirror descent update, with first-order estimates using the form suggested by Lemmas \ref{lem:srisk_can_estimate} and \ref{lem:ssrisk_grad_link}. Making this more explicit, let $\Phi: \RR^{d} \to \RR$ be a strictly convex function, and let $\breg$ denote the Bregman divergence induced by $\Phi$. Modulated by positive step sizes $(\alpha_t)$, We generate a sequence of iterates $(w_t)$ using the following update rule:
\begin{align}
\label{eqn:smd_actual}
w_{t+1} = \argmin_{w \in \WW} \left[ \langle \widehat{G}_t, w \rangle + \frac{1}{\alpha_t}  \breg(w;w_t) \right].
\end{align}
The key stochastic ``gradients'' used here are defined as
\begin{align}
\label{eqn:grad_noisy}
\widehat{G}_t \defeq \frac{d}{\pert} \slhat\left(w_t + \pert U_t;Z_t\right) U_t,
\end{align}
where underlying sequences $(U_t)$ and $(Z_t)$ are assumed to be iid, with $U_t \sim \ndist_1$ and $Z_t \sim \ddist$ for all integer $t>0$, and $\slhat$ is as defined in (\ref{eqn:slhat_defn}). The full procedure is summarized in Algorithm \ref{algo:smd}.
\begin{algorithm}[t!]
\caption{Derivative-free stochastic mirror descent under spectral risks.}
\label{algo:smd}
\begin{algorithmic}
\State \textbf{inputs:} initial point $w_0 \in \WW$, step sizes $(\alpha_t)$, data set size $M$, and max iterations $T$.
\medskip
\For{$t \in \{0,\ldots,T-1\}$}
 \State Get ancillary sample $\{Z_{t,1}^{\prime},\ldots,Z_{t,M}^{\prime}\}$, setting $\dfunhat_{w_t}(u) \defeq (\nicefrac{1}{M})\sum_{i=1}^{M} \idc_{\{ \loss(w_t;Z_{t,i}^{\prime}) \leq u \}}$.
 \State Sample $U_t$ and $Z_t$, compute gradient $\widehat{G}_t$ via (\ref{eqn:grad_noisy}).
 \smallskip
 \State Update $w_t \mapsto w_{t+1}$ via (\ref{eqn:smd_actual}).
 \smallskip
\EndFor
\State \textbf{return:} $\overbar{w}_{T} \defeq (\nicefrac{1}{T})\sum_{t=1}^{T}w_t$.
\end{algorithmic}
\end{algorithm}

\paragraph{Technical conditions}

Letting $\WW \subset \RR^{d}$ be closed, bounded, and convex, denote any minimizer of $\srisk$ over $\WW$ by $w^{\ast}$. Denote the diameter of $\WW$, measured respectively using the underlying norm $\|\cdot\|$ and the Bregman divergence $\breg$, as $\diameter \defeq \sup\{\|w-w^{\prime}\|: w,w^{\prime} \in \WW\}$ and $\diameter_{\Phi} \defeq \sup\{\breg(w;w^{\prime}): w,w^{\prime} \in \WW\}$. Since the random perturbations may take us to points outside $\WW$, let us define $\CC \defeq \{w+u: w \in \WW, \|u\| \leq 1\}$ to cover all such possibilities. Let $\Phi$ be $\strong$-strongly convex on $\CC$ (e.g., $\Phi(u)=\|u\|_{2}^{2}/2$, with $\strong=1$). On the underlying loss distribution, we assume the following moment bounds are finite:
\begin{align*}
\smooth_{\risk} & \defeq \sup_{v \in \CC} \risk(v)\\
s_1^{2} & \defeq \sup_{v \in \CC} \exx_{\ndist_1,\ddist}\left[ \|\sloss(v;Z)U - \exx_{\ndist_1,\ddist}\left[\sloss(v;Z)U\right]\|^{2} \right]\\
s_2^{2} & \defeq \sup_{v \in \CC} \exx_{\ddist}|\loss(v;Z)|^{2}.
\end{align*}
Finally, we assume that the conditions of Lemmas \ref{lem:srisk_can_estimate}--\ref{lem:ssrisk_grad_link} hold, the loss is such that $w \mapsto \loss(w;Z)$ is convex and continuous on $\CC$, and that the spectral density $\spec(\cdot)$ is $\smooth_{\spec}$-Lipschitz.

\begin{thm}[Spectral risk bounds in expectation]\label{thm:smd_srisk}
Under the preceding assumptions, let $\overbar{w}_{T}$ be the output of Algorithm \ref{algo:smd} run for $T$ steps, using $M$ points for distribution estimates, and step sizes $\alpha_t = \strong/(\smooth_{\risk} + 1/c_T)$ with $c_T \defeq (\pert/d)\sqrt{2\diameter_{\Phi} \strong/(T(s_1^{2}+(\smooth_{\spec}s_2)^{2}))}$, fixed for all $t$. Then we have
\begin{align*}
\exx\left[ \srisk(\overbar{w}_T)-\srisk(w^{\ast}) \right] \leq 2\smooth_{\risk}\pert + \frac{d}{\pert}\left[ \sqrt{\frac{2\diameter_{\Phi}(s_1^{2} + (\smooth_{\spec}s_2)^{2})}{T\strong}} + \frac{\smooth_{\risk}\diameter_{\Phi}}{T\strong} + \smooth_{\spec}\smooth_{\risk}\diameter\sqrt{\frac{\pi}{2M}} \right]
\end{align*}
for any choice of $0 < \pert < 1$, where expectation is taken over $U_{[T]}$, $Z_{[T]}$, and the ancillary data.
\end{thm}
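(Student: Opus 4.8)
The plan is to decompose the excess spectral risk $\srisk(\overbar{w}_T) - \srisk(w^\ast)$ into three conceptually separate pieces: (i) the \emph{smoothing bias} $\srisk(w) - \ssrisk(w)$, which measures how far the smoothed surrogate is from the true risk; (ii) the \emph{optimization error} for $\ssrisk$ incurred by running stochastic mirror descent with the approximate gradients $\widehat{G}_t$; and (iii) the \emph{estimation error} arising because we use $\slhat$ (built from the $M$-point empirical c.d.f. $\dfunhat_{w_t}$) instead of $\sloss$ (built from the true $\dfun_{w_t}$). Concretely, write $\srisk(\overbar{w}_T) - \srisk(w^\ast) = [\srisk(\overbar{w}_T) - \ssrisk(\overbar{w}_T)] + [\ssrisk(\overbar{w}_T) - \ssrisk(w^\ast)] + [\ssrisk(w^\ast) - \srisk(w^\ast)]$, and bound the first and third brackets by the smoothing bias and the middle bracket by the SMD analysis.

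First I would handle the smoothing bias. Since $w \mapsto \loss(w;Z)$ is convex and continuous on $\CC$, the quantile functional $\vrisk_\beta$ inherits convexity in $w$ (it is a monotone, translation-equivariant, positively-homogeneous functional of the loss, or one invokes the $\crisk$ representation \eqref{eqn:srisk_cvar} together with convexity of $\crisk_\beta$), so $\srisk$ is convex on $\CC$; hence $\srisk(w) \le \ssrisk(w) = \exx_\ndist \srisk(w + \pert U)$ by Jensen, giving one side. For the other side, $\ssrisk(w) - \srisk(w) = \exx_\ndist[\srisk(w+\pert U) - \srisk(w)]$, and I would bound $\srisk(w+\pert U) - \srisk(w)$ using Lipschitz-type control: $\vrisk_\beta$ is $1$-Lipschitz-compatible with the loss in the sense that $|\vrisk_\beta(w+\pert U) - \vrisk_\beta(w)| \le \sup_z |\loss(w+\pert U;z) - \loss(w;z)|$ is not available without a Lipschitz loss, so instead I expect the bound $|\srisk(v) - \srisk(v')| \le$ something like $\smooth_\risk$ times a factor — indeed the $2\smooth_\risk \pert$ term in the statement suggests the authors bound $\ssrisk(w) - \srisk(w)$ by $\smooth_\risk$-type quantities and conclude $|\srisk(w) - \ssrisk(w)| \le 2\smooth_\risk \pert$ (the factor $2$ presumably covering both the Jensen side and a crude bound via $\srisk(v) \le \risk(v) \le \smooth_\risk$ combined with monotonicity arguments). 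This is the step I'd want to be most careful about, and I'd lean on the $\crisk$ representation \eqref{eqn:srisk_cvar} plus known Lipschitz/translation properties of $\crisk_\beta$.

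Next, the optimization error for $\ssrisk$. By Lemma~\ref{lem:ssrisk_grad_link}, $\exx_{\ndist_1}[(d/\pert)\sloss(w+\pert U;Z)U \mid w]$ would be $\nabla\ssrisk(w)$ if we used $\sloss$; with $\slhat$ there is a bias term $b_t \defeq \exx[(d/\pert)(\slhat - \sloss)(w_t+\pert U_t;Z_t)U_t \mid w_t]$. I would run the standard strongly-convex-regularizer mirror-descent regret bound: for convex $\ssrisk$ (convexity of $\ssrisk$ follows from that of $\srisk$), one gets $\ssrisk(\overbar{w}_T) - \ssrisk(w^\ast) \le \frac{1}{T}\sum_t \langle \exx[\widehat{G}_t \mid w_t], w_t - w^\ast\rangle \le \frac{\diameter_\Phi}{T\alpha} + \frac{\alpha}{2\strong}\cdot\frac{1}{T}\sum_t \exx\|\widehat{G}_t\|^2 + \frac{1}{T}\sum_t \langle b_t, w_t - w^\ast\rangle$, using strong convexity of $\Phi$ on $\CC$ to control the Bregman terms. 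The second moment $\exx\|\widehat{G}_t\|^2$ splits via $\|\slhat\| \le \|\sloss\| + |\loss|\cdot|\spec(\dfunhat) - \spec(\dfun)| \le \|\sloss\| + \smooth_\spec |\loss|\,|\dfunhat - \dfun|$; the variance of the $\sloss$-part is controlled by $(d/\pert)^2 s_1^2$ and the $\spec$-Lipschitz correction by $(d/\pert)^2 (\smooth_\spec s_2)^2$ (times a $\|\dfunhat - \dfun\|^2$ factor bounded by $1$), which is exactly why $s_1^2 + (\smooth_\spec s_2)^2$ and the prefactor $(d/\pert)^2$ appear. Optimizing over the constant step size $\alpha$ yields the $\sqrt{\diameter_\Phi(s_1^2+(\smooth_\spec s_2)^2)/(T\strong)}$ rate and the $\smooth_\risk\diameter_\Phi/(T\strong)$ lower-order term (the $\smooth_\risk$ entering through $\exx\|\widehat{G}_t\|^2$ via $\exx|\loss| \le$ something bounded by $\smooth_\risk$, or through the step-size formula $\alpha_t = \strong/(\smooth_\risk + 1/c_T)$).

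Finally, the estimation/bias term. I would bound $\exx\langle b_t, w_t - w^\ast\rangle \le \diameter \cdot \exx\|b_t\|$, and $\|b_t\| \le (d/\pert)\,\exx[|\loss(w_t+\pert U_t;Z_t)|\,\smooth_\spec |\dfunhat_{w_t} - \dfun_{w_t}|(\loss(\cdots))]$. Since $\dfunhat_{w_t}$ is the empirical c.d.f. from $M$ iid ancillary points (independent of $U_t, Z_t$), for each fixed argument $u$ we have $\exx|\dfunhat_{w_t}(u) - \dfun_{w_t}(u)| \le \sqrt{\vaa(\dfunhat_{w_t}(u))} \le \sqrt{\dfun(1-\dfun)/M} \le 1/(2\sqrt M) = \sqrt{\pi/(2M)}\cdot(1/\sqrt{2\pi})$ — more precisely the factor $\sqrt{\pi/(2M)}$ comes out cleanly if one uses $\exx|\dfunhat - \dfun| \le \sqrt{\pi/(2M)}\sqrt{\dfun(1-\dfun)}$-type bound (a Berry–Esseen / half-normal comparison), and $\sqrt{\dfun(1-\dfun)} \le 1/2$; combined with $\exx|\loss| \le \smooth_\risk$ this gives the $\smooth_\spec \smooth_\risk \diameter \sqrt{\pi/(2M)}$ term after the $(d/\pert)$ prefactor. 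The hard part will be the smoothing-bias estimate (ensuring convexity and the right Lipschitz-type control of $\srisk$ under translation so that it closes at $2\smooth_\risk\pert$ with no hidden dependence on $d$ or $\smooth_\spec$), followed by carefully tracking how $\spec$-Lipschitzness turns the c.d.f.\ estimation error into additive $(\smooth_\spec s_2)^2$ variance and $\smooth_\spec\sqrt{\pi/(2M)}$ bias contributions rather than multiplicative ones. Summing the three pieces and substituting the stated $\alpha_t$ and $c_T$ gives exactly the claimed bound.
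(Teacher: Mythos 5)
Your decomposition into smoothing bias, optimization error for $\ssrisk$, and CDF-estimation bias is exactly the paper's, and the key ingredients you identify (the $\smooth_{\risk}$-Lipschitzness of $\srisk$ on $\CC$, unbiasedness of the idealized gradient via Lemma \ref{lem:ssrisk_grad_link}, $\spec$-Lipschitzness converting CDF error into gradient error, $\|\dfunhat-\dfun\|_{\infty}\le 1$ for the variance term, and an $M^{-1/2}$ rate for the bias term) are all the right ones. Three detail-level remarks. First, the smoothing bias needs no Jensen argument: the paper simply establishes that convex, continuous $\srisk$ is $\smooth_{\risk}$-Lipschitz on the compact set $\CC$, so $|\srisk(w)-\ssrisk(w)|=|\exx_{\ndist}[\srisk(w)-\srisk(w+\pert U)]|\le\smooth_{\risk}\pert$ uniformly on $\WW$, and the factor $2$ is just the two outer brackets of your own decomposition; your worry about hidden constants here is unfounded. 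Second, and most substantively: a plain regret bound with $\frac{\alpha}{2\strong T}\sum_t\exx\|\widehat{G}_t\|^2$ yields the \emph{uncentered} second moment of $\sloss(v;Z)U$, whereas $s_1^2$ in the theorem is the centered variance. The paper instead uses the $(d\smooth_{\risk}/\pert)$-smoothness of $\ssrisk$ (which follows from the Lipschitz property and Lemma \ref{lem:ssrisk_grad_link}) in a descent-lemma argument that isolates the deviations $\|\widehat{G}_t-G_t\|^2$ and $\|G_t-\nabla\ssrisk(w_t)\|^2$; this is also what produces the $\smooth_{\risk}\diameter_{\Phi}/(T\strong)$ lower-order term through the step size $\alpha=\strong(\smooth+1/c)^{-1}$. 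To close your version at the stated constant you would need to center $\widehat{G}_t$ around $\nabla\ssrisk(w_t)$ in the regret analysis rather than use the raw second moment. Third, the $\sqrt{\pi/(2M)}$ factor comes from integrating the DKW tail bound for $\sup_u|\dfunhat_t(u)-\dfun_t(u)|$, i.e.\ $\int_0^\infty 2e^{-2M\varepsilon^2}\,\dif\varepsilon$, not from a Berry--Esseen or half-normal comparison; your pointwise variance bound $\exx|\dfunhat(u)-\dfun(u)|\le 1/(2\sqrt{M})$ is nevertheless sufficient (indeed smaller), since the CDF error in the bias term is only evaluated at a single argument independent of the ancillary sample.
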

The proof of Theorem \ref{thm:smd_srisk} is composed of several simple steps, but due to its length, we relegate the proof details of this and subsequent results to the supplementary appendix.

\paragraph{Sample complexity}

The guarantee given by Theorem \ref{thm:smd_srisk} is quite general, since the parameters $\pert$, $T$, and $M$ are free to be set as desired. Let us consider the important situation in which we are constrained to at most $n$ iid samples from the data distribution $\ddist$. In running Algorithm \ref{algo:smd}, for a simple and concrete example, let us set $M=\lceil \sqrt{n} \rceil$ to specify a precision level. Since each step uses $M + 1$ points, the number of steps $T$ can thus be no greater than $n/(1+\lceil \sqrt{n} \rceil)$, and setting $T = \lfloor n/(1+\lceil \sqrt{n} \rceil) \rfloor$ we will always be on budget, i.e., $T(M+1) \leq n$. Plugging these values in for $T$ and $M$, and subsequently minimizing the bound from Theorem \ref{thm:smd_srisk} as a function of $\pert$, we get $\exx\left[ \srisk(\overbar{w}_T)-\srisk(w^{\ast}) \right] \leq \varepsilon_{1}(n)$, where we define
\begin{align}\label{eqn:sample_complexity}
\varepsilon_{1}(n) \defeq 2\sqrt{2\smooth_{\risk}d\left[ \sqrt{\frac{2\diameter_{\Phi}(s_1^{2} + (\smooth_{\spec}s_2)^{2})}{\lfloor n/(1+\lceil \sqrt{n} \rceil) \rfloor\strong}} + \frac{\smooth_{\risk}\diameter_{\Phi}}{\lfloor n/(1+\lceil \sqrt{n} \rceil) \rfloor\strong} + \smooth_{\spec}\smooth_{\risk}\diameter\sqrt{\frac{\pi}{2\lceil \sqrt{n} \rceil}} \right]},
\end{align}
and thus to achieve $\exx\left[ \srisk(\overbar{w}_T)-\srisk(w^{\ast}) \right] \leq \epsilon$, the sample complexity is $\OO(\epsilon^{-8})$.

\paragraph{Comparison with derivative-free literature}

Here we try to place the sample complexity derived from (\ref{eqn:sample_complexity}) into some context. For a convex objective, the main result of \citet[Thm.~1]{flaxman2004a} yields a sample complexity of $\OO(\epsilon^{-6})$ for a derivative-free update analogous to the one used here. The reason for the slower $\OO(\epsilon^{-8})$ rate here is clear: while \citet{flaxman2004a} consider traditional risks, for our setup using spectral risks, we allocate (most) data to estimate $\dfun_{w_t}$ at each step, a requirement that does not arise in the traditional setting. To obtain faster rates, there are several natural routes. First, one could optimize the bound in Theorem \ref{thm:smd_srisk} with respect to ancillary dataset size $M$; we set $M = \sqrt{n}$ here for simplicity and readability. Second, our choice of the gradient estimator (\ref{eqn:grad_noisy}) was to maximize the ease of exposition; many alternative approaches have been studied over the past decade \citep{saha2011a,belloni2015a,gasnikov2017a,balasubramanian2021a}, and can readily be adapted to our problem setting to further improve the sample complexity; see \citet[Sec.~4.2]{larson2019a} for a survey of relevant methods.
\begin{rmk}[Faster rates for CVaR]
Our Lipschitz assumption on $\spec$ in Theorem \ref{thm:smd_srisk} precludes CVaR from the class of risks for which the performance guarantee holds. We consider this justifiable, since sub-gradient information is easily computed for the special case of CVaR, and $\OO(\epsilon^{-2})$ rates have already been proved in that restricted setting for stochastic sub-gradient algorithms \citep{soma2020a,holland2021c}.
\end{rmk}

\section{High-probability guarantees for unbounded losses}\label{sec:theory_highprob}

Theorem \ref{thm:smd_srisk} only provides guarantees in expectation, and thus it is natural to consider the output of Algorithm \ref{algo:smd} as an inexpensive but ``weak'' candidate. If we split up the data, obtaining multiple weak candidates and setting aside some data for careful validation, then we can apply a robust confidence-boosting technique, as follows.

If $n$ is our budget for sampling from $\ddist$, and we want $k$ independent candidates, run Algorithm \ref{algo:smd} $k$ times independently, using $\lfloor n/(k+1) \rfloor$ points each time. Denote the output of these sub-processes by $\overbar{w}^{(1)},\ldots,\overbar{w}^{(k)}$. Having computed these, we still have $\lfloor n/(k+1) \rfloor$ points left, and this data will be needed to determine which of the $k$ candidates to use. One half of this remaining data is used to construct $\dfunhat_w$, distinct from the estimates used within Algorithm \ref{algo:smd} to get each $\overbar{w}^{(j)}$. The other half, denoted $Z_{i}^{\prime\prime}$ for $i = 1,\ldots,\lfloor n/(k+1) \rfloor/2$, is used to compute a robust location estimate. As a concrete example, for each $j$ compute
\begin{align}
\widehat{\risk}_{\spec}^{(j)} \defeq \argmin_{a \in \RR} \sum_{i = 1}^{\lfloor n/(k+1) \rfloor/2} \rho\left(\frac{a - \loss(\overbar{w}^{(j)};Z_{i}^{\prime\prime})\spec(\dfunhat_{i,j}) }{b}\right),
\end{align}
where we have set $\dfunhat_{i,j} \defeq \dfunhat_{\overbar{w}^{(j)}}(\loss(\overbar{w}^{(j)};Z_{i}^{\prime\prime}))$ for readability, $\rho$ is a differentiable strictly convex function, and $b > 0$ is a scaling parameter. For an appropriate choice of $\rho$ and $b$, this is an M-estimator of the spectral risk incurred by $\overbar{w}^{(j)}$ \citep{catoni2012a,devroye2016a}. For each $j$, we introduce the key intermediate quantity
\begin{align}
\overbar{\risk}_{\spec}^{(j)} \defeq \exx_{\ddist}\left[ \loss(\overbar{w}^{(j)};Z)\spec(\dfunhat_{\overbar{w}^{(j)}}(\loss(\overbar{w}^{(j)};Z))) \right].
\end{align}
For comparison, let write $\risk_{\spec}^{(j)} \defeq \srisk(\overbar{w}^{(j)})$ for the spectral risk incurred by the $j$th candidate. Assuming the spectral density is bounded as $\spec(\cdot) \leq \overbar{\spec} < \infty$, then we can obtain the following upper bounds:
\begin{align}
\nonumber
|\widehat{\risk}_{\spec}^{(j)} - \risk_{\spec}^{(j)}| & \leq | \widehat{\risk}_{\spec}^{(j)} - \overbar{\risk}_{\spec}^{(j)} | + | \overbar{\risk}_{\spec}^{(j)} - \risk_{\spec}^{(j)} |\\
\nonumber
& \leq | \widehat{\risk}_{\spec}^{(j)} - \overbar{\risk}_{\spec}^{(j)} | + \smooth_{\spec} \exx_{\ddist}|\loss(\overbar{w}^{(j)};Z)| \left[\sup_{u \in \RR}|\dfunhat_{\overbar{w}^{(j)}}(u) - \dfun_{\overbar{w}^{(j)}}(u)|\right]\\
\label{eqn:highprob_helper}
& \leq \varepsilon_{2}(n;k,\delta) \defeq 2\overbar{\spec}s_{2}\sqrt{\frac{2(1+\log(2\delta^{-1}))}{\lfloor n/(k+1) \rfloor}} + \smooth_{\spec} s_{2} \sqrt{\frac{\log(4\delta^{-1})}{\lfloor n/(k+1) \rfloor}},
\end{align}
where (\ref{eqn:highprob_helper}) holds with probability no less than $1-\delta$, over the random draw of the data points used to compute $\dfunhat_w$ and $\widehat{\risk}_{\spec}^{(j)}$ here, conditioned on $\overbar{w}^{(j)}$ (detailed proof in the appendix). Algorithmically, all we need to do is choose the best candidate based on the above robust estimates, namely
\begin{align}\label{eqn:highprob_candidate}
\overbar{w}^{\ast} \defeq \overbar{w}^{(\star)}, \text{ where } \star \defeq \argmin_{j \in [k]} \widehat{\risk}_{\spec}^{(j)}.
\end{align}
This ``boosted'' choice enjoys a high-probability guarantee, as desired.
\begin{thm}\label{thm:highprob}
For confidence parameter $0 < \delta < 1$, if we set the number of weak candidates to $k = \lceil \log(2\lceil\log(\delta^{-1})\rceil) \rceil$ and compute $\overbar{w}^{\ast}$ as in (\ref{eqn:highprob_candidate}), then we have
\begin{align*}
& \srisk(\overbar{w}^{\ast}) - \srisk(w^{\ast}) \leq \mathrm{e}\,\varepsilon_{1}\left(\frac{n}{k+1}\right) + 2\varepsilon_{2}(n;k,\delta)
\end{align*}
with probability no less than $1-3\delta$, where $\varepsilon_{1}$ and $\varepsilon_{2}$ are as defined in (\ref{eqn:sample_complexity}) and (\ref{eqn:highprob_helper}).
\end{thm}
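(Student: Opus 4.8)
The plan is to combine the "in expectation" guarantee of Theorem \ref{thm:smd_srisk} for each weak run with a standard median-style confidence-boosting argument, then pay the cost of validation via the deviation bound \eqref{eqn:highprob_helper}. First I would fix attention on the $k$ independent runs of Algorithm \ref{algo:smd}, each using a budget of roughly $n/(k+1)$ samples, so that by Theorem \ref{thm:smd_srisk} (in the form packaged as $\varepsilon_1$) each candidate satisfies $\exx[\srisk(\overbar{w}^{(j)}) - \srisk(w^{\ast})] \leq \varepsilon_1(n/(k+1))$. Applying Markov's inequality to the nonnegative random variable $\srisk(\overbar{w}^{(j)}) - \srisk(w^{\ast})$, with threshold $\mathrm{e}\,\varepsilon_1(n/(k+1))$, each run independently fails this bound with probability at most $1/\mathrm{e}$. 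Hence the probability that \emph{all} $k$ runs fail is at most $\mathrm{e}^{-k}$, and with our choice $k = \lceil \log(2\lceil\log(\delta^{-1})\rceil)\rceil$ this is at most $\delta/(2\lceil\log(\delta^{-1})\rceil) \leq \delta$ (a loose bound suffices). So on an event of probability at least $1-\delta$, at least one index $j_0$ has $\risk_{\spec}^{(j_0)} - \srisk(w^{\ast}) \leq \mathrm{e}\,\varepsilon_1(n/(k+1))$.

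Next I would handle the selection step \eqref{eqn:highprob_candidate}. The key is that $\widehat{\risk}_{\spec}^{(j)}$ is a good proxy for $\risk_{\spec}^{(j)}$: by \eqref{eqn:highprob_helper}, conditioned on $\overbar{w}^{(j)}$, we have $|\widehat{\risk}_{\spec}^{(j)} - \risk_{\spec}^{(j)}| \leq \varepsilon_2(n;k,\delta)$ with probability at least $1-\delta$ for each fixed $j$. Here I would need to be slightly careful about the union over $j \in [k]$; the cleanest route is to note that the confidence parameter can be rescaled (replace $\delta$ by $\delta/k$ inside $\varepsilon_2$, which only changes logarithmic factors and is absorbed in the stated $\varepsilon_2(n;k,\delta)$ up to constants), or alternatively to invoke that the three "bad" events each carry probability $\delta$ and the final statement allows a $3\delta$ slack. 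Granting that on a further event of probability at least $1-\delta$ the estimate $\widehat{\risk}_{\spec}^{(j)}$ is within $\varepsilon_2$ of $\risk_{\spec}^{(j)}$ simultaneously for all $j$, the standard comparison goes: writing $\star = \argmin_j \widehat{\risk}_{\spec}^{(j)}$,
\begin{align*}
\srisk(\overbar{w}^{\ast}) = \risk_{\spec}^{(\star)} \leq \widehat{\risk}_{\spec}^{(\star)} + \varepsilon_2 \leq \widehat{\risk}_{\spec}^{(j_0)} + \varepsilon_2 \leq \risk_{\spec}^{(j_0)} + 2\varepsilon_2 \leq \srisk(w^{\ast}) + \mathrm{e}\,\varepsilon_1\!\left(\tfrac{n}{k+1}\right) + 2\varepsilon_2,
\end{align*}
where the first inequality in the chain uses the validation bound at index $\star$, the second uses optimality of $\star$ for the $\widehat{\risk}$ values, the third uses the validation bound at $j_0$, and the last uses the Markov step from the previous paragraph.

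Finally I would collect the probability budget: the "some good candidate exists" event costs $\delta$, the "all validation estimates accurate" event costs $\delta$, and the derivation of \eqref{eqn:highprob_helper} itself (the M-estimator concentration plus the DKW-type bound on $\sup_u|\dfunhat_w - \dfun_w|$, both of which are invoked there) is what consumes the remaining $\delta$; a union bound over these three gives the claimed $1-3\delta$. The main obstacle I anticipate is bookkeeping rather than conceptual: making the union bound over the $k$ candidates in the validation step interact cleanly with the exact constants in $\varepsilon_2(n;k,\delta)$ as written, and confirming that the sample split (the $\lfloor n/(k+1)\rfloor$ held-out points, further halved between building $\dfunhat_w$ and computing the M-estimators) is consistent with both the $\varepsilon_1(n/(k+1))$ term and the denominators appearing in $\varepsilon_2$. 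Everything else — Markov, the $\argmin$ comparison chain, and the $\mathrm{e}^{-k}$ bound — is routine.
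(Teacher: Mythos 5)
Your overall architecture --- Markov's inequality applied to each weak candidate, an argmin-comparison chain through the robust validation estimates, and a union bound over the bad events --- is exactly the ``robust confidence boosting'' scheme the paper uses; the paper simply outsources that step to \citet[Lem.~9 and the proof of Thm.~4]{holland2021c} and spends most of its own text proving the validation bound (\ref{eqn:highprob_helper}) via DKW plus M-estimator concentration, which you take as given. So you are on the paper's track conceptually, but as a self-contained argument your version has a genuine gap in the Markov step: from $\exx[\srisk(\overbar{w}^{(j)})-\srisk(w^{\ast})]\leq\varepsilon_1(n/(k+1))$ you correctly get that all $k$ runs fail the $\mathrm{e}\,\varepsilon_1$ threshold with probability at most $\mathrm{e}^{-k}$, but with $k=\lceil\log(2\lceil\log(\delta^{-1})\rceil)\rceil$ this equals $1/(2\lceil\log(\delta^{-1})\rceil)$ --- there is no factor of $\delta$ in that quantity, so your claimed intermediate bound $\delta/(2\lceil\log(\delta^{-1})\rceil)$ is an arithmetic slip, and the corrected quantity is \emph{not} $\leq\delta$ for small $\delta$ (e.g.\ $\delta=0.01$ gives $k=3$ and $\mathrm{e}^{-3}\approx 0.05>\delta$). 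The plain ``at least one candidate is good'' argument needs $k\gtrsim\log(\delta^{-1})$; the doubly-logarithmic $k$ in the theorem statement can only be justified by reproducing the more refined selection analysis of the cited boosting lemma, which your write-up does not do.

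Your probability accounting is also muddled. The three $\delta$'s you list at the end are (a) ``some good candidate exists,'' (b) ``all validation estimates accurate,'' and (c) ``the derivation of (\ref{eqn:highprob_helper})'' --- but (b) and (c) are the same event: (\ref{eqn:highprob_helper}) \emph{is} the statement that $|\widehat{\risk}_{\spec}^{(j)}-\risk_{\spec}^{(j)}|\leq\varepsilon_2(n;k,\delta)$, and its proof already splits its $\delta$ between the DKW event and the M-estimator event. Meanwhile the cost you never actually pay is the one you flagged yourself: (\ref{eqn:highprob_helper}) holds at level $1-\delta$ \emph{per candidate}, so requiring it simultaneously for the indices you use costs an extra factor (your chain only needs it at $\star$ and $j_0$, i.e.\ $2\delta$, which is plausibly where the paper's $1-3\delta$ comes from) --- this should be stated rather than waved off as ``absorbed up to constants,'' since $\varepsilon_2(n;k,\delta)$ is written with explicit constants. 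The argmin-comparison chain itself is correct and standard.
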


\section{Fast implementation and empirical analysis}\label{sec:empirical}

The procedure outlined by Algorithm \ref{algo:smd} yields clear formal guarantees for a wide class of spectral risks where exact gradient computations are infeasible, as described in Theorems \ref{thm:smd_srisk} and \ref{thm:highprob}. On the other hand, in the interest of practical utility, we would like to improve the slow convergence rates using even approximate first-order information, since in many cases both $\loss$ and $\spec$ will be at least sub-differentiable. In this section, we outline a simple modified procedure which makes more direct use of the first-order information we have, empirically comparing it with both Algorithm \ref{algo:smd} and traditional ERM, as natural benchmarks.

\paragraph{Modified procedure}

Issues with differentiability arise chiefly because the form of $\dfun_{w}$ is unknown. Arguably the simplest way to circumvent this difficulty is to introduce a parametric model to approximate the loss CDF. Our modified procedure takes Algorithm \ref{algo:smd} as a starting point, and makes the following changes. First, at each step in the main loop, instead of $\dfunhat_{w_t}$, we use a folded Normal distribution, with mean and standard deviation parameters set using empirical estimates based on the ancillary sample, evaluated at $w_t$. Denote this parametric estimate of $\dfun_{w_t}$ by $\dfunhat_t$, and its derivative by $\ddfunhat_t$. Next, conditioned on $\dfunhat_t$, a few applications of the chain rule lets us compute the partial derivatives of $w \mapsto \loss(w;Z)\spec(\dfunhat_t(\loss(w;Z)))$ easily. At each step $t$ we will use the following gradient estimate:
\begin{align}
\label{eqn:grad_noisy_fast}
\widetilde{G}_t \defeq \left[ \spec(\dfunhat_t(\loss_t)) + \loss_t \spec^{\prime}(\dfunhat_t(\loss_t))\ddfunhat_t(\loss_t) \right] \nabla \loss(w_t;Z_t),
\end{align}
where we have written $\loss_t \defeq \loss(w_t;Z_t)$ for readability. Our modified procedure is completed by using the update (\ref{eqn:smd_actual}), replacing $\widehat{G}_t$ with $\widetilde{G}_t$ just specified.

\paragraph{Experimental design}

We compare three methods: derivative-free Algorithm \ref{algo:smd} (called \texttt{default} in the figures), the modified procedure described in the previous paragraph (called \texttt{fast}), and traditional empirical risk minimization (called \texttt{off}). We mean ``traditional'' in terms of the risk, and thus \texttt{off} amounts to running (\ref{eqn:smd_actual}) and simply replacing $\widehat{G}_t$ with the original loss gradient $\nabla\loss(w_t;Z_t)$, and using all data for training (no ancillary set needed). All methods are run using the Euclidean norm for distance computation, and thus (\ref{eqn:smd_actual}) is just a standard steepest descent update with step size $\alpha_t$, plug projection onto $\WW$. We apply each of these methods to classification tasks on a number of standard benchmark datasets, using standard multi-class logistic regression. For \texttt{default}, we fix $\alpha_t = 2\gamma/(d\sqrt{n})$, where $d$ is the total number of parameters to be determined, and $n$ is the number of training samples. The extra factor $\gamma/d$ is to account for the coefficient in (\ref{eqn:grad_noisy}); this approach mirrors other derivative-free procedures \citep[Thm.~1]{flaxman2004a}. For \texttt{fast} and \texttt{off}, we simply fix $\alpha_t = 2/\sqrt{n}$. These settings were selected before running any experiments. For each dataset, 10 independent trials are run, in which the full dataset is randomly shuffled before starting, with each method randomly initialized to the same point, and run for 50 epochs. Finally, as an illustrative example for our tests, we set $\spec(\cdot)$ to the exponential risk spectrum $\spec(u) = c\,\mathrm{e}^{-c(1-u)}/(1-\mathrm{e}^{-c})$, fixing $c=1$, a well-established standard from the literature \citep{dowd2006a,pandey2019a}.

\paragraph{Software}

All code required to pre-process the data, run the experiments, and re-create the figures in this paper is available at: \url{https://github.com/feedbackward/spectral}

\paragraph{Results and discussion}

Plots of empirical spectral risk and misclassification rates are shown in Figures \ref{fig:empirical}--\ref{fig:empirical2}. The plotted trajectories represent averages taken over all trials, and the shaded area around the \emph{test} error is the average $\pm$ standard deviation. The individual plot titles (e.g., \texttt{cifar10}) refer to the datasets used. Additional data details are given in the appendix. One key observation that can be made is that the proposed modification \texttt{fast} achieves an appealing balance of performance in terms of spectral risk and misclassification error. Depending on the dataset, we see that without tuning the step size parameter, \texttt{off} may outperform \texttt{fast} in terms of the spectral risk, though the only stark difference appears in the case of \texttt{cifar10}, and additional testing has shown this can be mitigated with more careful step size setting. That said, it is quite remarkable that \texttt{fast} maintains a superior misclassification rate across all datasets tested. On the other hand, as suggested by the results of section \ref{sec:theory_expectation}, \texttt{default} is slow to converge, and also quite sensitive to step size settings. For simplicity and transparency we have used a fixed step size for each method, and though it should be noted that dataset-specific tuning of the step size does allow us to ensure \texttt{default} converges at close to the expected rate, the clear differences in sensitivity and speed make \texttt{fast} the first choice for practical spectral risk-based learning tasks. To further refine \texttt{default}, introducing multi-point derivative-free methods and update rules that better utilize sparse inputs \citep{balasubramanian2021a}. As for \texttt{fast}, since the current model is quite naive with respect to the form of the loss distribution, introducing more robust modeling techniques \citep{lange1989a} is expected to have a major impact on practical utility.

\begin{figure}[t]
\centering
\includegraphics[width=0.35\textwidth]{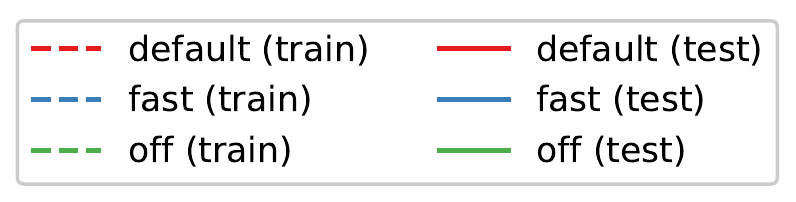}\\
\includegraphics[width=0.25\textwidth]{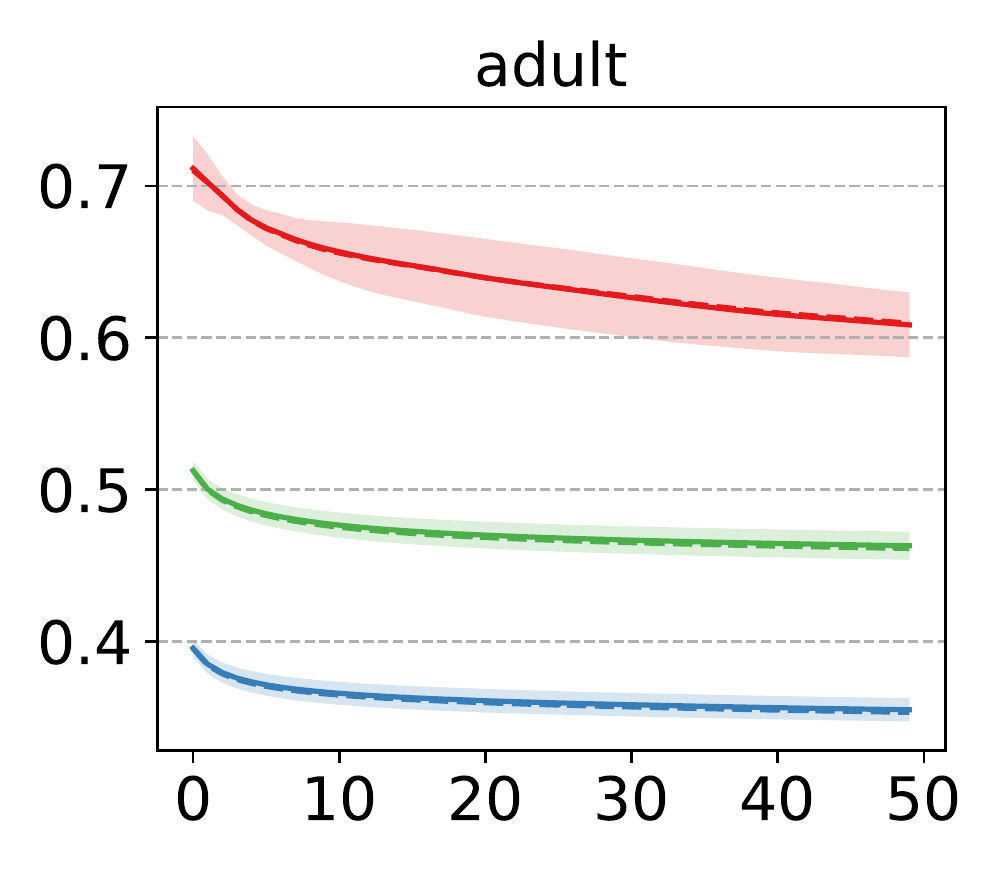}\includegraphics[width=0.25\textwidth]{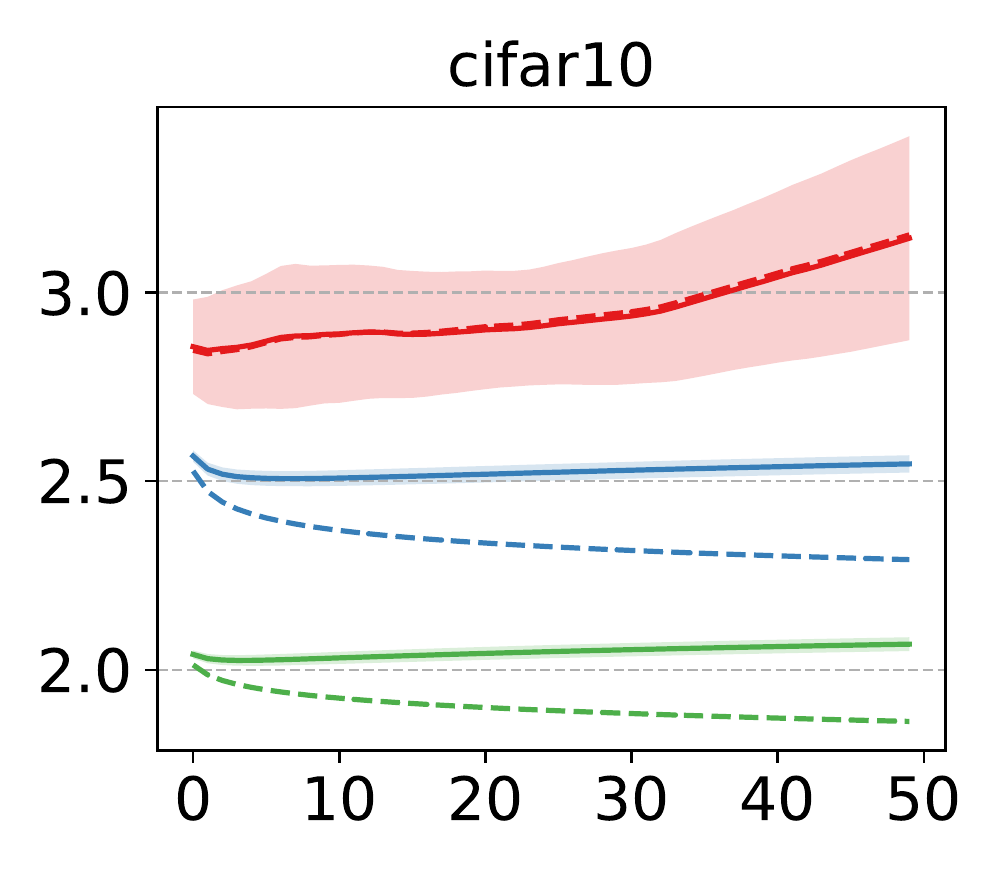}\includegraphics[width=0.25\textwidth]{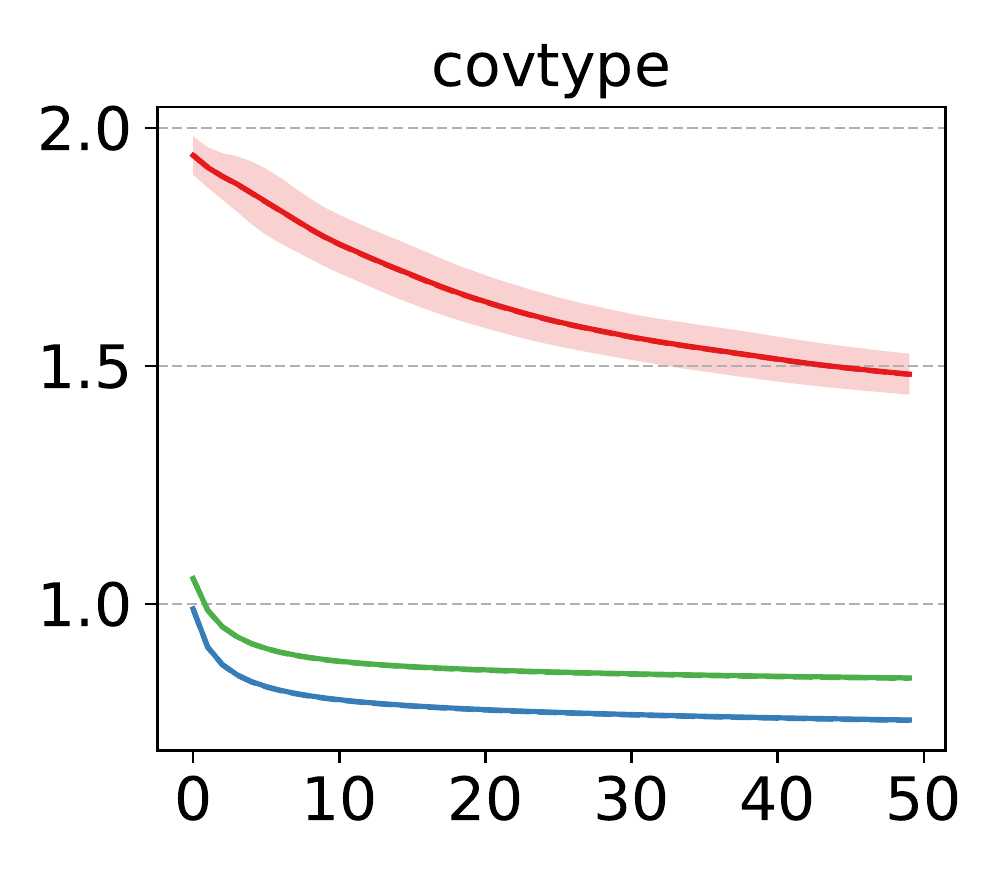}\includegraphics[width=0.25\textwidth]{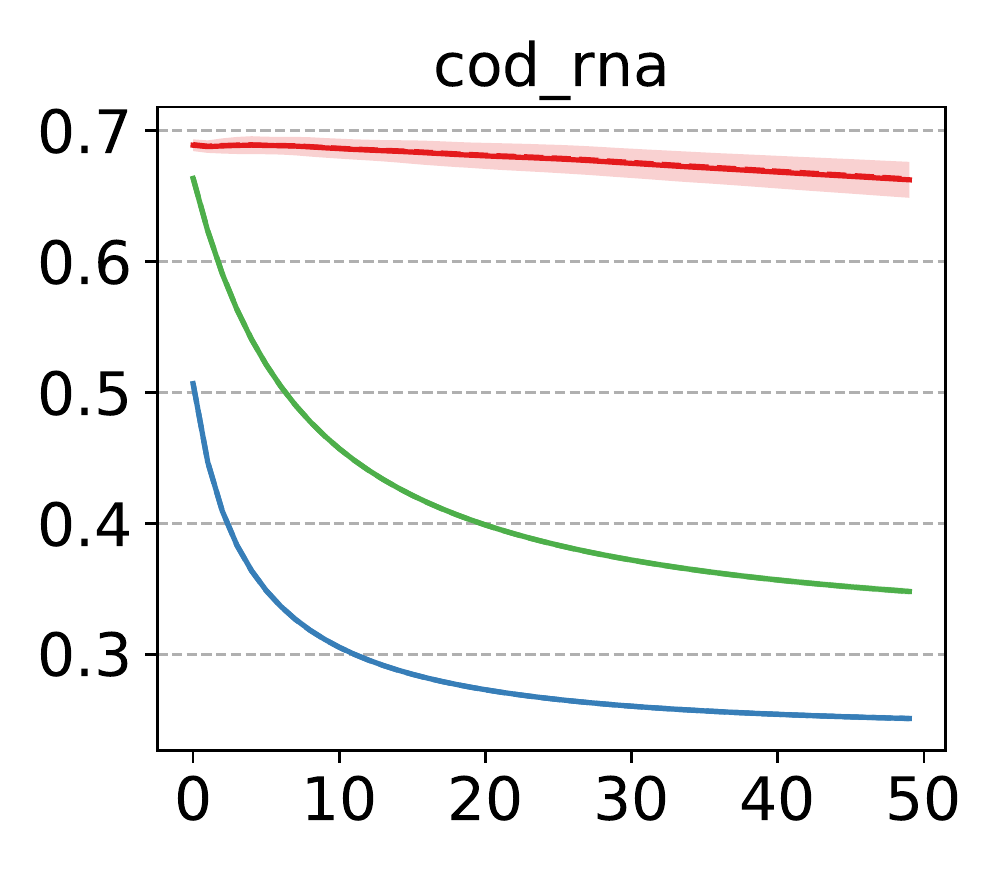}\\
\includegraphics[width=0.25\textwidth]{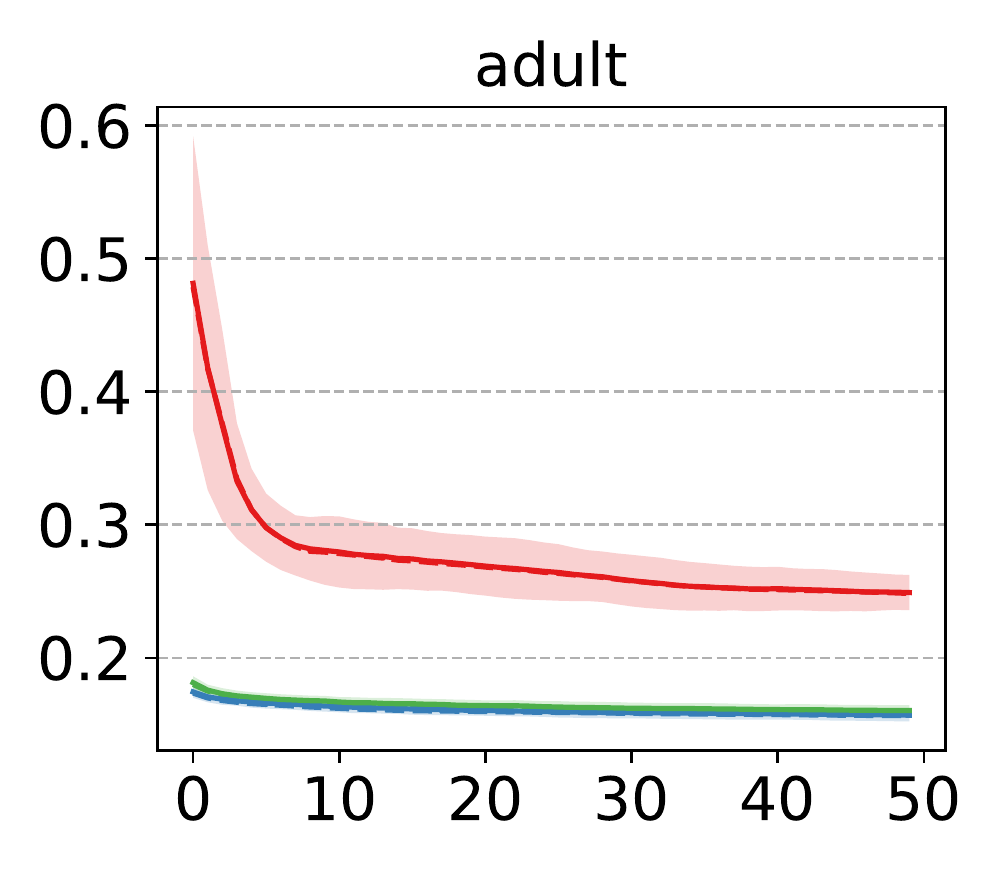}\includegraphics[width=0.25\textwidth]{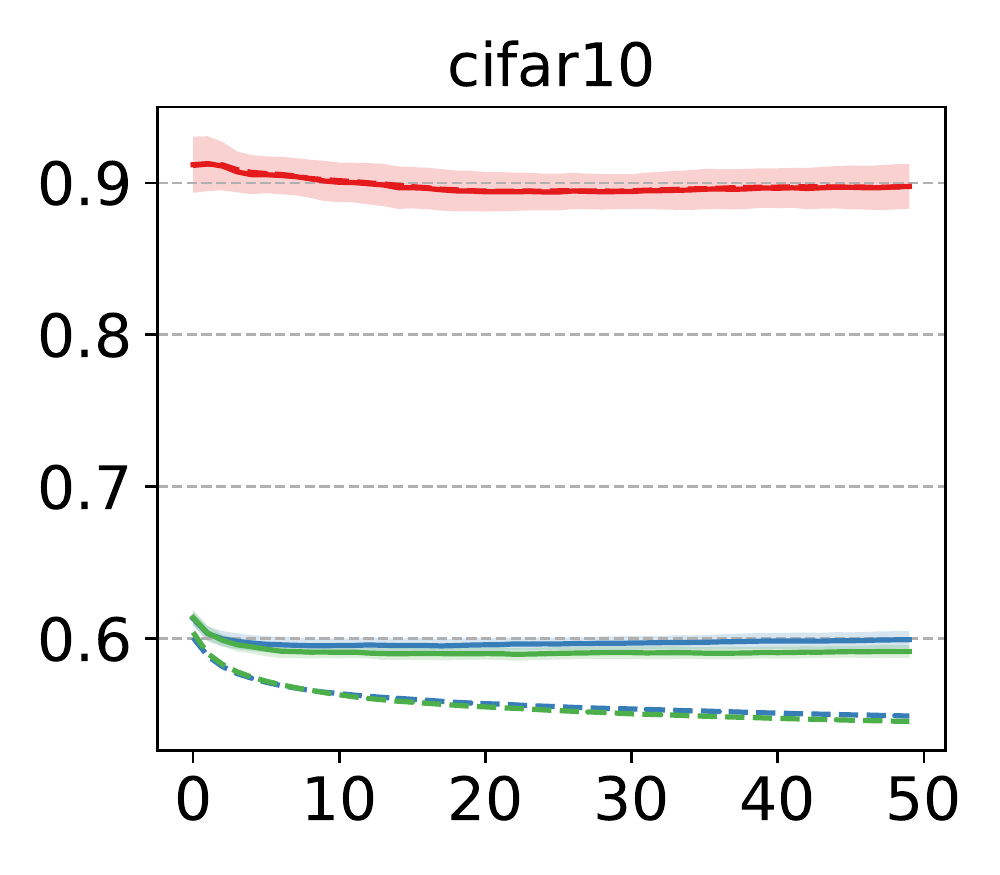}\includegraphics[width=0.25\textwidth]{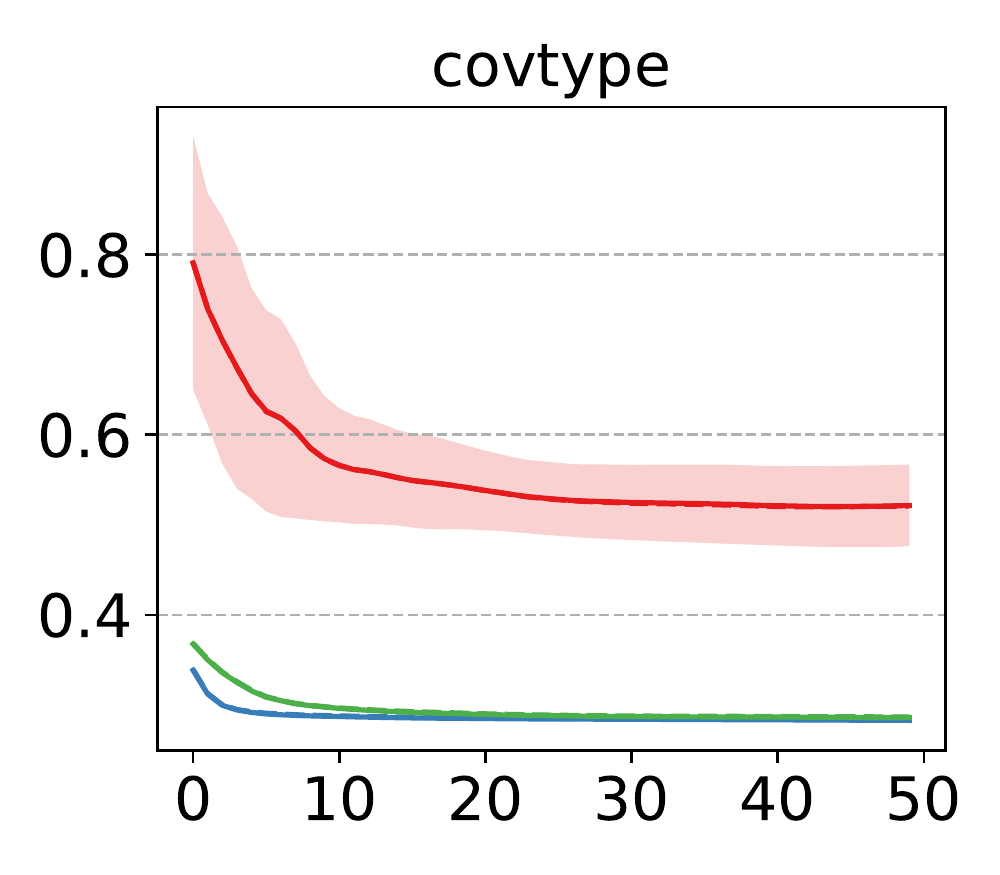}\includegraphics[width=0.25\textwidth]{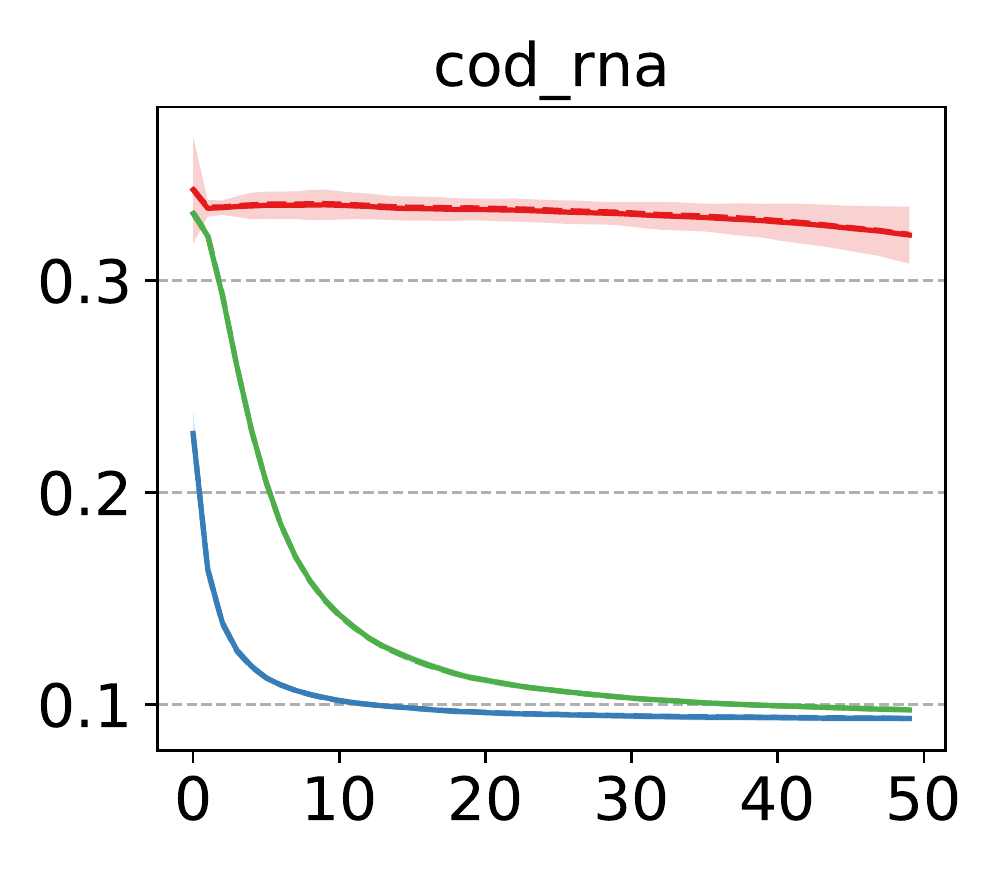}
\caption{Error trajectories for each method and dataset, on both training (dashed) and testing (solid) data. Top row: empirical spectral risks. Bottom row: misclassification rates.}
\label{fig:empirical}
\end{figure}
\begin{figure}[t]
\centering
\includegraphics[width=0.25\textwidth]{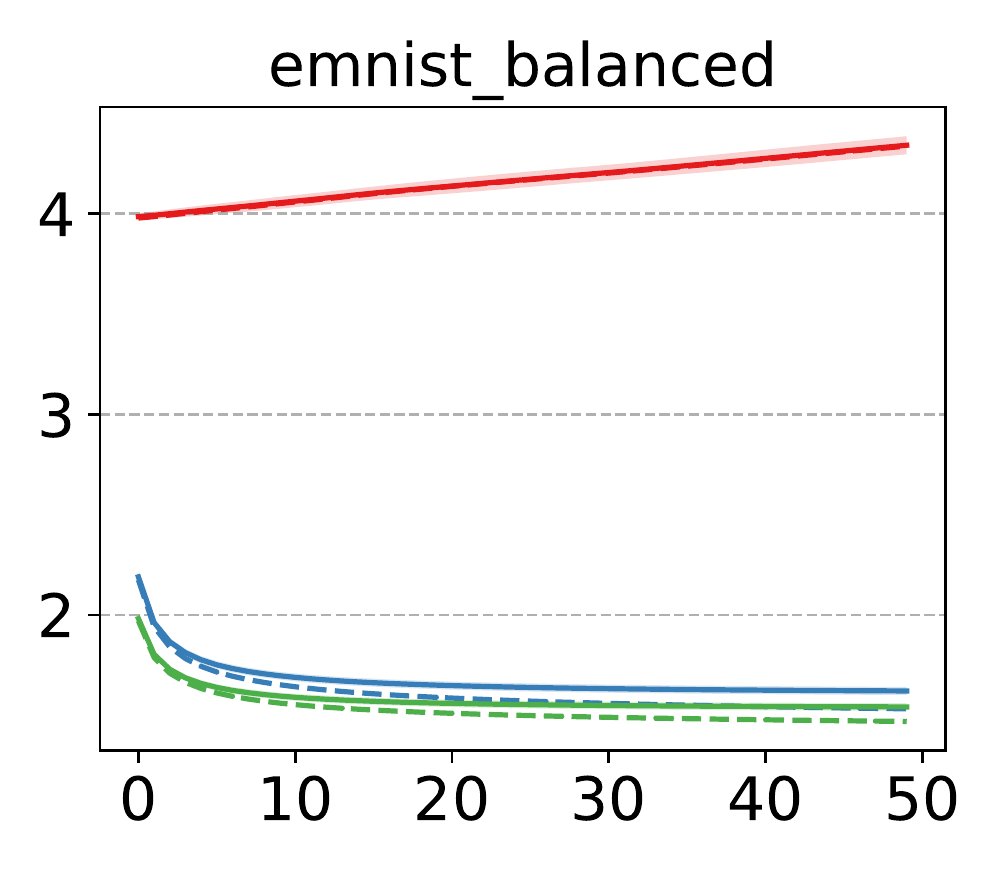}\includegraphics[width=0.25\textwidth]{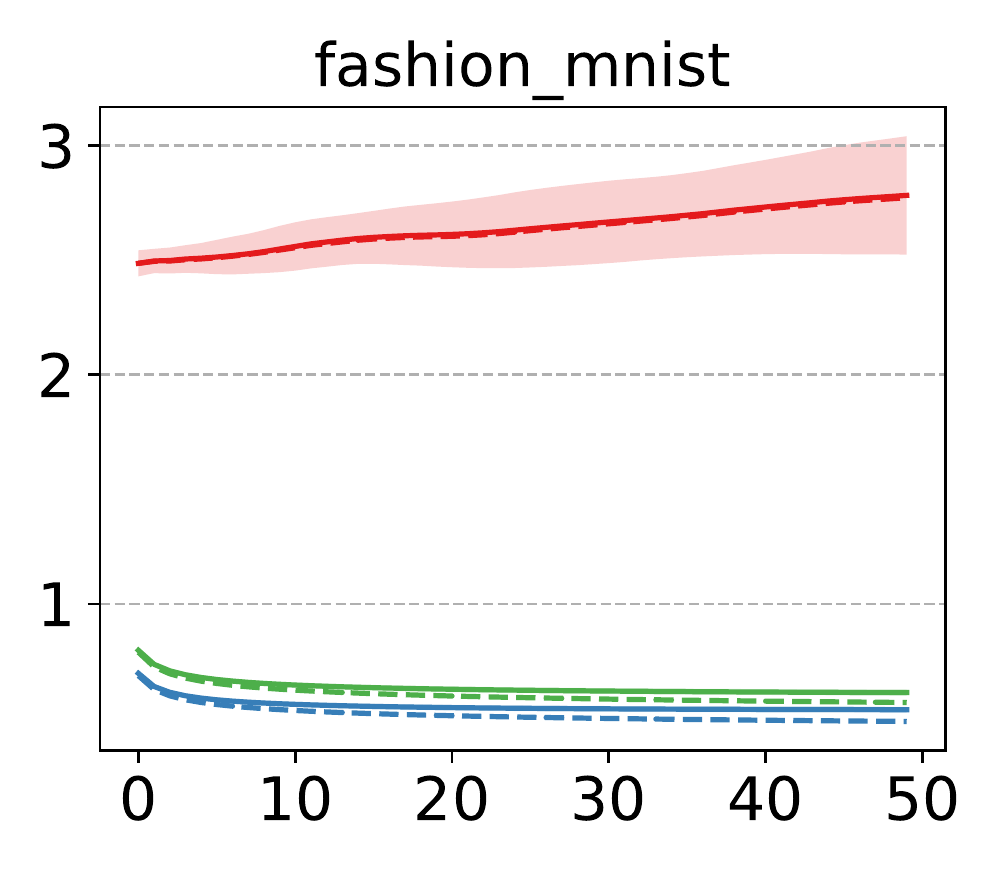}\includegraphics[width=0.25\textwidth]{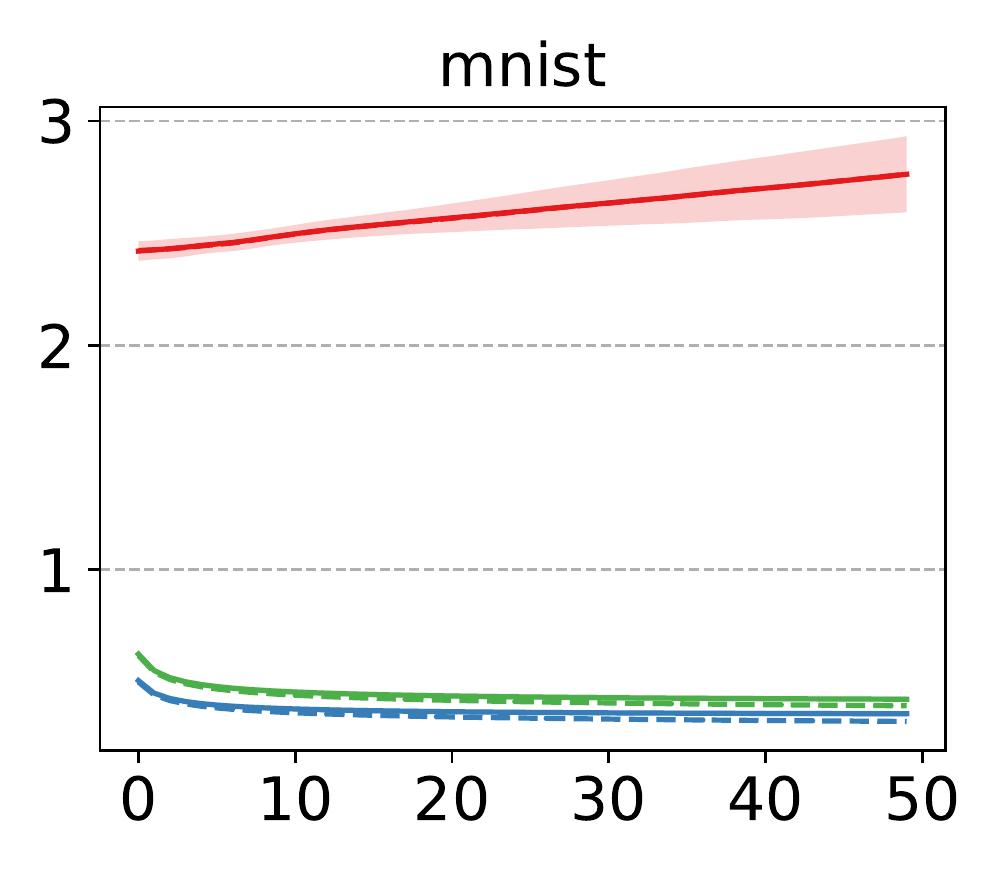}\includegraphics[width=0.25\textwidth]{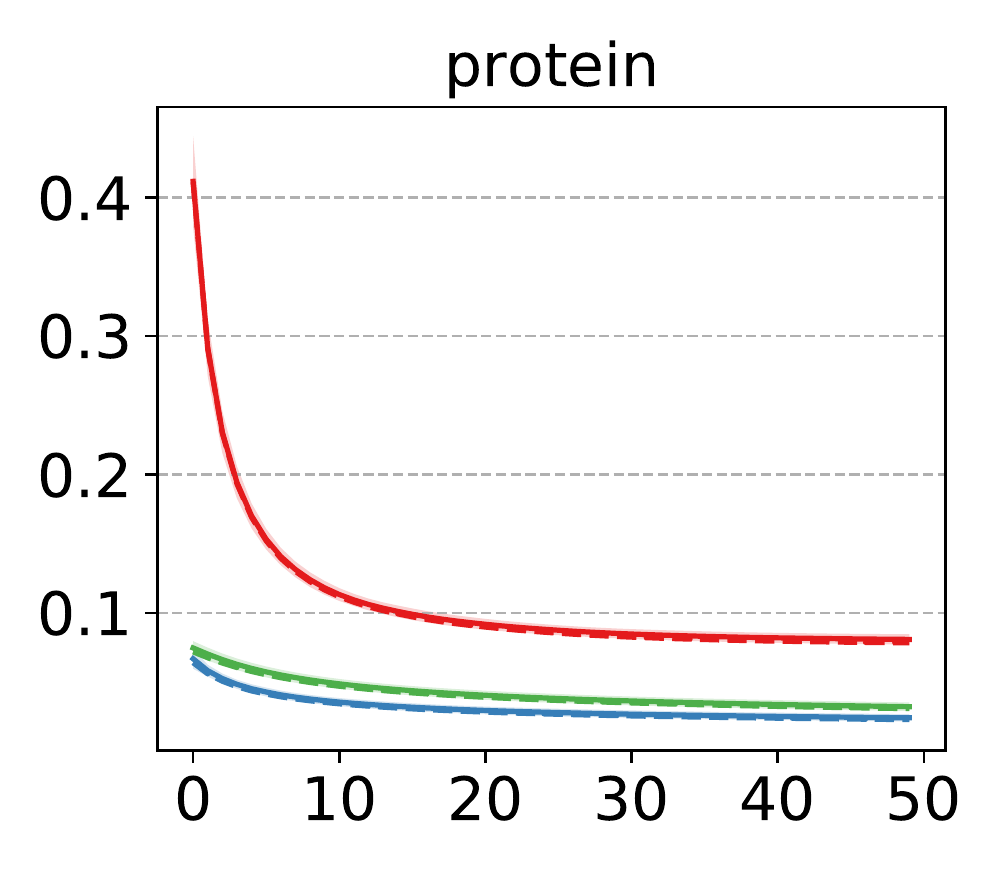}\\
\includegraphics[width=0.25\textwidth]{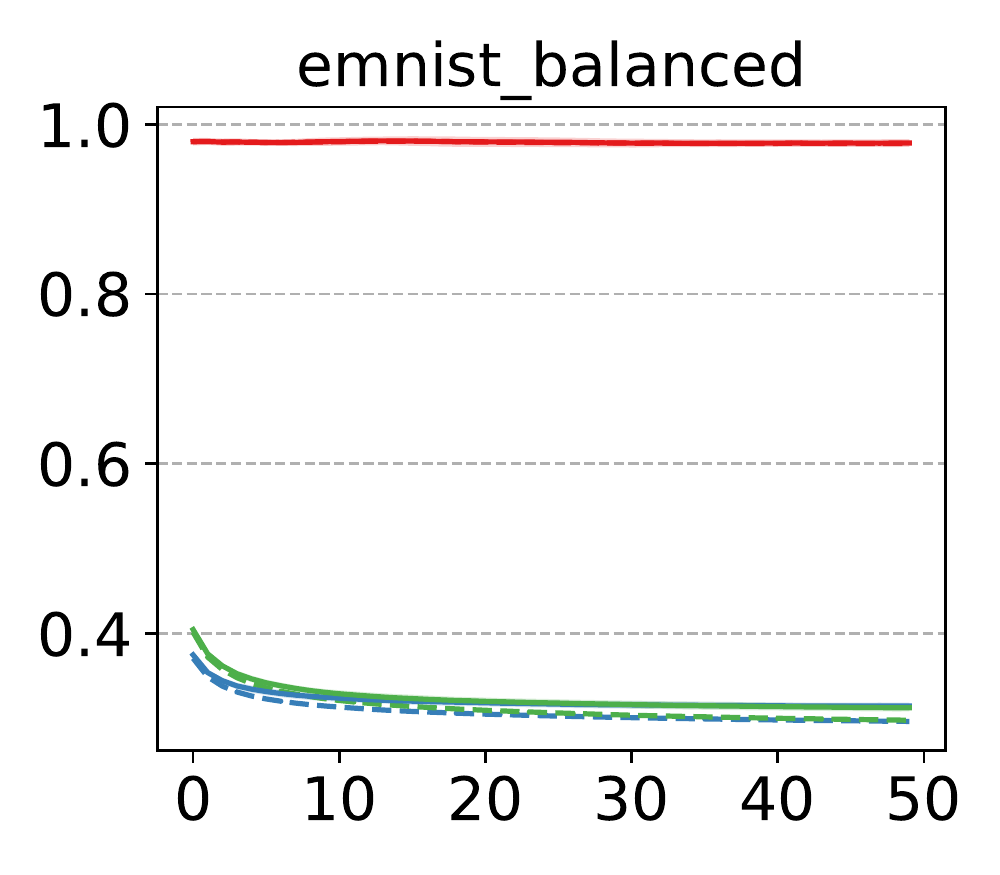}\includegraphics[width=0.25\textwidth]{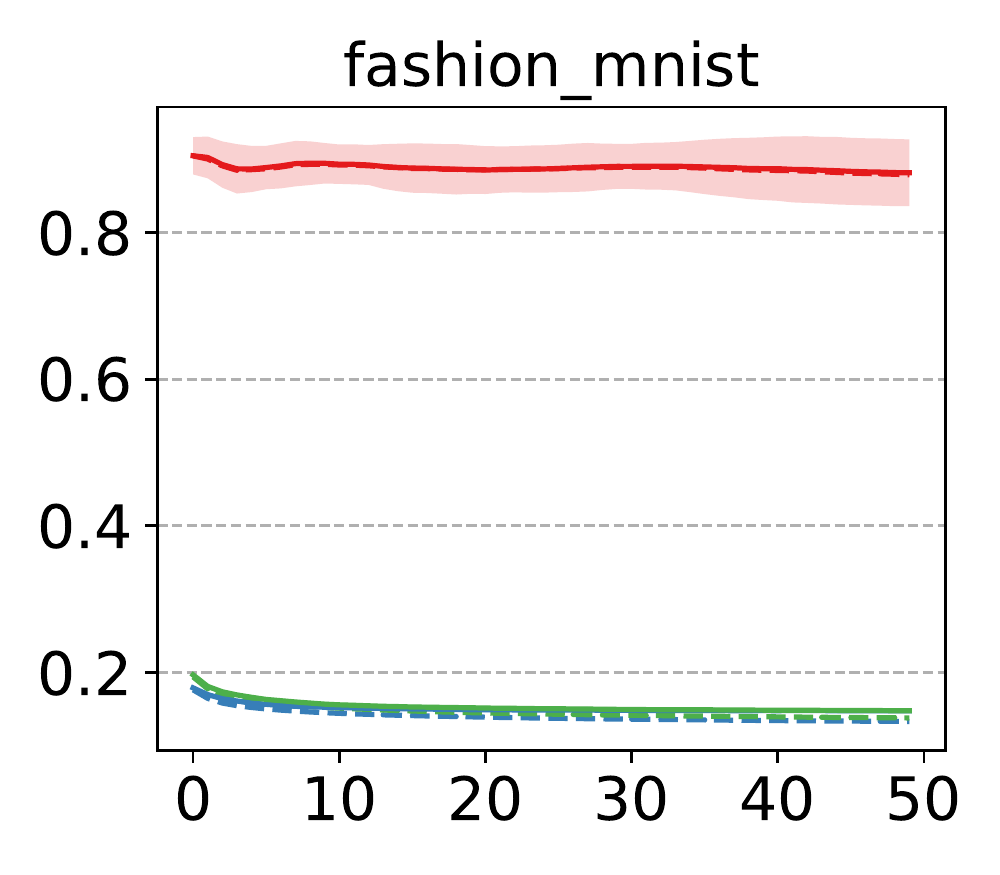}\includegraphics[width=0.25\textwidth]{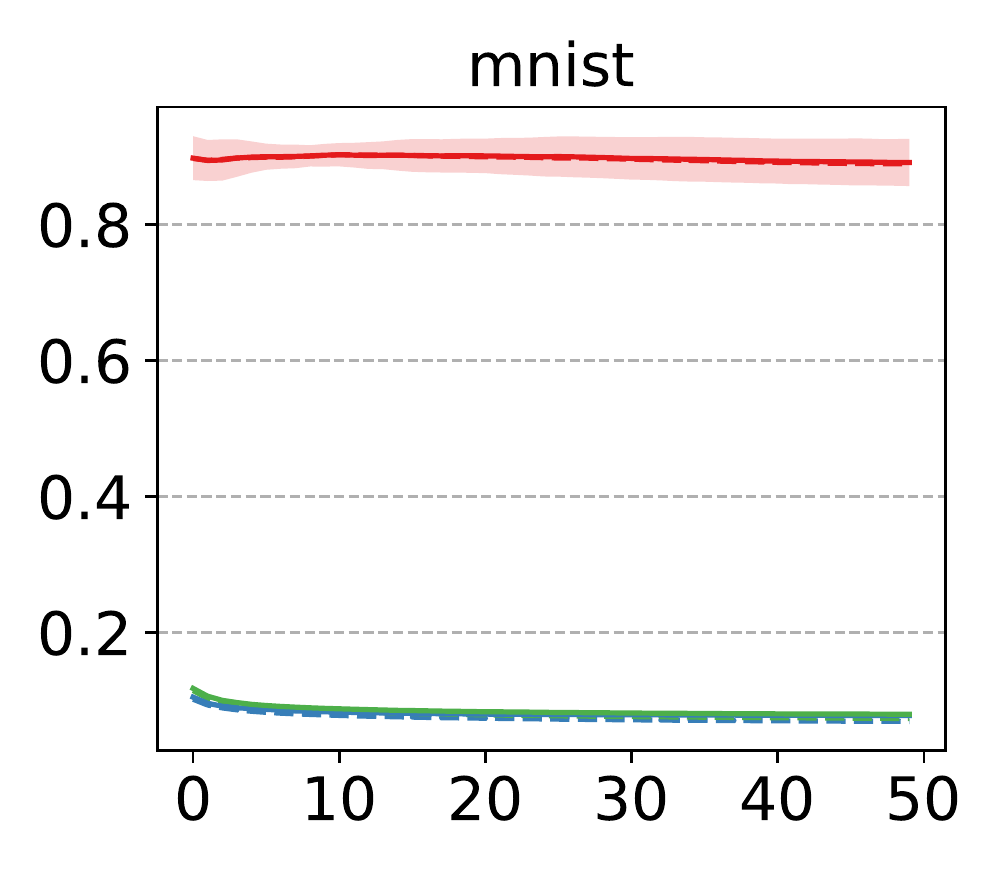}\includegraphics[width=0.25\textwidth]{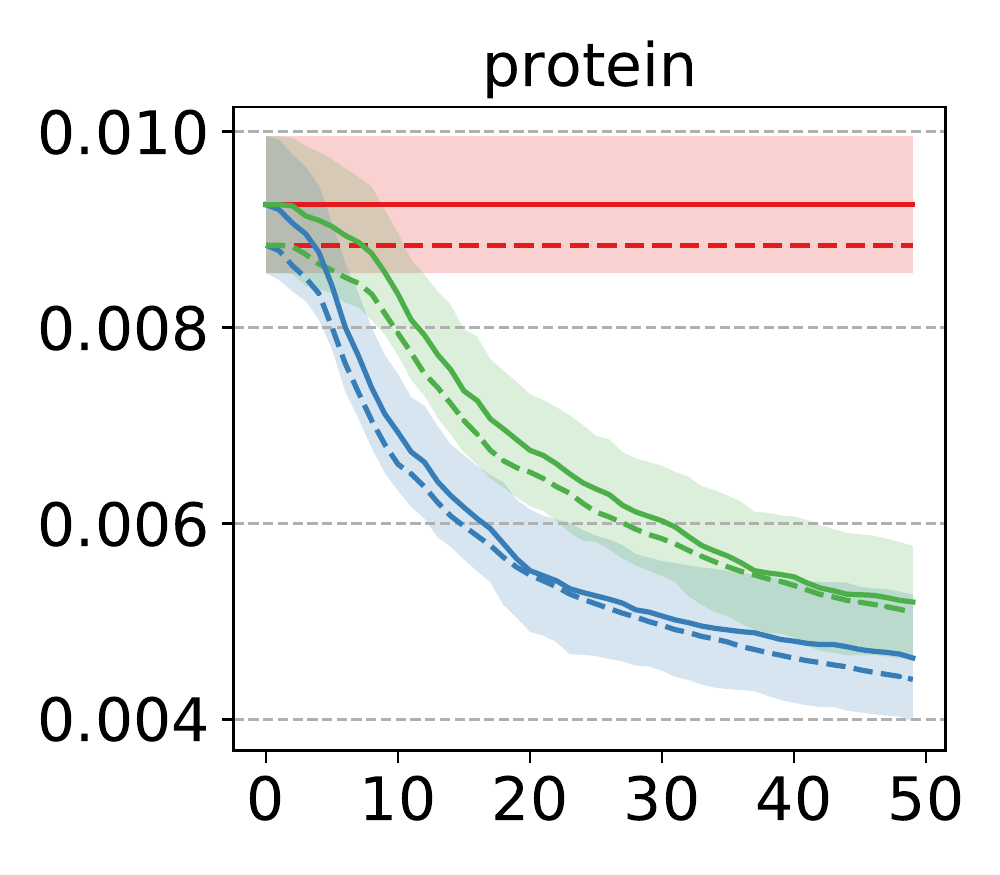}
\caption{Completely analogous to Figure \ref{fig:empirical}, for four additional datasets.}
\label{fig:empirical2}
\end{figure}

\section{Concluding remarks}

In this paper, we have studied a derivative-free learning procedure (Algorithm \ref{algo:smd}) with excess spectral risk guarantees, under losses that may be unbounded and heavy-tailed (Theorems \ref{thm:smd_srisk} and \ref{thm:highprob}), and provided a fast implementation which on numerous real-world classification tasks has been shown to be efficient without any hyperparameter tuning. Given the existing work on spectral risk estimation \citep{pandey2019a} and ERM for spectral risks \citep{khim2020a}, our results contribute to the literature by providing a transparent algorithmic solution for spectral risk-based learning, which is easy to implement and comes with lucid formal guarantees, plus a modified procedure that scales better to larger tasks.

Moving forward, the approach via Lemma \ref{lem:ssrisk_grad_link} relies crucially on Stokes' theorem on $\RR^{d}$, and lacks an analogue on richer spaces. Function space representations are useful in many learning methods \citep{dai2014a,nitanda2018a}, and extending Theorem \ref{thm:smd_srisk} to general Hilbert spaces is a point of interest. How should the noise be generated? How should derivatives be defined? While a direct analogue using differential theory (e.g., the Fr\'{e}chet differential \citep{penot2012CWOD}) appears difficult, an appeal to basic results in Malliavin calculus \citep{decreusefond2019} may open the door to a major generalization of the initial results established here.

\appendix

\section{Detailed proofs}

\subsection{Proofs from section \ref{sec:theory_expectation}}

\begin{proof}[Proof of Theorem \ref{thm:smd_srisk}]
At a high level, we first control $\ssrisk(\overbar{w}_T)$, and then using the fact that the functions $\ssrisk$ and $\srisk$ are close to each other on $\WW$, we can cast performance in terms of $\srisk$. This closeness depends on the perturbation factor $\pert$; smaller is closer. On the other hand, the smoothness coefficient of $\ssrisk$ grows as $\pert$ gets small, leading to a natural tradeoff. We begin the proof by showing this smoothness, which enables us to control $\ssrisk(\overbar{w}_T)$ in a straightforward manner, using an argument that relies upon well-known properties of mirror descent procedures.

\paragraph{Step 1: smoothness property of the smoothed spectral risk}

Recalling the expression (\ref{eqn:srisk_cvar}) for $\srisk$, if the map $w \mapsto \loss(w;Z)$ is convex and continuous on $\CC$, then so is $w \mapsto \crisk_{\beta}(w)$.\footnote{See for example \citet[Prop.~3.1, Lem.~3.1]{ruszczynski2006a}} From (\ref{eqn:srisk_cvar}), this immediately implies that $w \mapsto \srisk(w)$ is convex and continuous on $\CC$, and thus that there exists a constant $0 < \smooth_{\risk} \leq \sup_{v \in \CC} \srisk(v) < \infty$ such that
\begin{align}
\label{eqn:srisk_lipschitz}
|\srisk(v)-\srisk(v^{\prime})| \leq \smooth_{\risk}\|v - v^{\prime}\|
\end{align}
for all $v,v^{\prime} \in \CC$.\footnote{Since the closure of $\CC$ is compact, continuity implies that $\srisk$ is bounded above on $\CC$. The Lipschitz property follows from standard results, such as \citet[Prop.~3.8]{penot2012CWOD}.} Now turning our attention to the smoothed spectral risk $\ssrisk$, taking $U \sim \ndist_1$ and any $w,w^{\prime} \in \WW$, we write the resulting noisy parameters as $W \defeq w + \pert U$ and $W^{\prime} \defeq w^{\prime} + \pert U$. Using the key equality (\ref{eqn:ssrisk_grad_link}) along with (\ref{eqn:srisk_lipschitz}) just given, and the fact that $\|U\|=1$ almost surely $[\ndist_1]$, we have
\begin{align}
\nonumber
\| \nabla\ssrisk(w) - \nabla\ssrisk(w^{\prime}) \| & = \frac{d}{\pert} \left\| \exx_{\ndist_1}U\left[ \srisk(W) - \srisk(W^{\prime}) \right] \right\|\\
\nonumber
& \leq \frac{d}{\pert} \left| \srisk(W) - \srisk(W^{\prime}) \right|\\
\label{eqn:ssrisk_smooth}
& \leq \frac{d\smooth_{\risk}}{\pert} \|w - w^{\prime}\|.
\end{align}
As such, we can conclude that the smoothed spectral risk $\ssrisk$ is indeed $(d\smooth_{\risk}/\pert)$-smooth on $\WW$.

\paragraph{Step 2: idealized stochastic gradient}

As an idealized counterpart to $\widehat{G}_t$, we introduce $G_t \defeq (d/\pert)\sloss\left(w_t + \pert U_t;Z_t\right) U_t$. This is an ideal quantity in the sense that it is the stochastic gradient that would be obtained if the true distribution function $\dfun_t \defeq \dfun_{w_t}$ was known. Denote the ancillary datasets used in Algorithm \ref{algo:smd} by $\Z_{t}^{\prime} \defeq \{Z_{t,1}^{\prime},\ldots,Z_{t,M}^{\prime}\}$, for each step $t$, where $M$ is the specified size. Denote sub-sequences as $U_{[t]} \defeq (U_1,\ldots,U_t)$ for all $t>0$ (analogously for $Z_{[t]}$ and $\Z_{[t]}^{\prime}$), and write $\exx_{[t]}$ to denote taking expectation jointly over $(U_{[t]},Z_{[t]},\Z_{[t]}^{\prime})$. With this notation in place, note that taking expectation over all random elements, we can readily observe
\begin{align}
\nonumber
\exx \left[ G_t \right] & = \exx_{[t]} G_t\\
\nonumber
& = \exx_{[t-1]} \exx_{\ndist_1,\ddist} \left[ G_t \cond U_{[t-1]}, Z_{[t-1]}, \Z_{[t-1]}^{\prime} \right]\\
\nonumber
& = \exx_{[t-1]} \exx_{\ndist_1} \exx_{\ddist}\left[ G_t \cond U_{[t]}, Z_{[t-1]}, \Z_{[t-1]}^{\prime} \right]\\
\nonumber
& = \exx_{[t-1]} \exx_{\ndist_1} \left(\frac{d}{\pert}\right)\srisk(w_t + \pert U_t)U_t\\
\nonumber
& = \exx_{[t-1]} \nabla\ssrisk(w_t)\\
\label{eqn:grad_unbiased}
& = \exx \left[ \nabla\ssrisk(w_t) \right].
\end{align}
The first and last equalities hold because $G_t$ and $w_t$ are independent of all random quantities with index $t+1$ or larger. The second and third equalities use the law of total expectation.\footnote{See for example \citet[Thm.~5.3.3 and 5.5.4]{ash2000a}.} The rest just uses the definition of $\srisk$ and the unbiased property (\ref{eqn:ssrisk_grad_link}). This establishes that $G_t$ provides us with an unbiased estimate of the gradient of the smoothed spectral risk. Although the sequence $(G_t)$ is not directly observable, this unbiasedness will be technically useful.

\paragraph{Step 3: setup for mirror descent analysis}

As an intermediate step in the overall argument, we consider stochastic minimization of $\ssrisk$ using the procedure specified by (\ref{eqn:smd_actual}). Say we know that $\ssrisk$ is convex and $\smooth$-smooth on $\CC$.\footnote{In (\ref{eqn:ssrisk_smooth}) we have already been proved this holds with $\smooth = d\smooth_{\risk}/\pert$.} Taking advantage of smoothness and convexity, the following series of inequalities will make for a good starting point:
\begin{align}
\nonumber
\ssrisk&(w_{t+1}) - \ssrisk(w_t)\\
\nonumber
& \leq \langle \nabla\ssrisk(w_t), w_{t+1}-w_t \rangle + \frac{\smooth}{2} \|w_{t+1}-w_t\|^{2}\\
\nonumber
& = \langle \widehat{G}_t, w_{t+1}-w_t \rangle + \langle G_t-\widehat{G}_t, w_{t+1}-w_t \rangle + \langle \nabla\ssrisk(w_t)-G_t, w_{t+1}-w_t \rangle + \frac{\smooth}{2} \|w_{t+1}-w_t\|^{2}\\
\nonumber
& \leq \langle \widehat{G}_t, w_{t+1}-w_t \rangle + \left( \|G_t-\widehat{G}_t\| + \|\nabla\ssrisk(w_t)-G_t\| \right) \| w_{t+1}-w_t \| + \frac{\smooth}{2} \|w_{t+1}-w_t\|^{2}\\
\nonumber
& \leq \langle \widehat{G}_t, w_{t+1}-w_t \rangle + \frac{c}{2}\left( \|G_t-\widehat{G}_t\| + \|\nabla\ssrisk(w_t)-G_t\| \right)^{2} + \left( \frac{\smooth}{2} + \frac{1}{2c} \right) \|w_{t+1}-w_t\|^{2}\\
\label{eqn:min_ssrisk_init}
& \leq \langle \widehat{G}_t, w_{t+1}-w_t \rangle + c\left( \|G_t-\widehat{G}_t\|^{2} + \|\nabla\ssrisk(w_t)-G_t\|^{2} \right) + \left( \smooth + \frac{1}{c} \right) \frac{\breg(w_{t+1};w_t)}{\strong}.
\end{align}
The first inequality uses a basic property of functions with Lipschitz-continuous gradients.\footnote{See for example \citet[Thm.~2.1.5]{nesterov2004ConvOpt}.}  The second inequality is just Cauchy-Schwarz. The third inequality uses the elementary fact $2ab \leq ca^{2} + b^{2}/c$ for any $c>0$. The final inequality makes use of the fact that $(a+b)^{2} \leq 2(a^{2}+b^{2})$, and the fact that $\strong$-strong convexity of $\Phi$ implies $\breg(u;v) \geq (\strong/2)\|u-v\|^{2}$.

\paragraph{Step 4: bounding intermediate terms}

Taking the first term in (\ref{eqn:min_ssrisk_init}), fixing any $\widetilde{w}^{\ast} \in \RR^{d}$ we trivially have
\begin{align}
\label{eqn:min_ssrisk_intermediate_1}
\langle \widehat{G}_t, w_{t+1}-w_t \rangle = \langle \widehat{G}_t, w_{t+1}-\widetilde{w}^{\ast} \rangle + \langle G_t, \widetilde{w}^{\ast}-w_t \rangle + \langle \widehat{G}_t-G_t, \widetilde{w}^{\ast}-w_t \rangle.
\end{align}
Taking the right-hand side one term at a time, the first term is bounded by
\begin{align}
\label{eqn:min_ssrisk_intermediate_2}
\langle \widehat{G}_t, w_{t+1}-\widetilde{w}^{\ast} \rangle \leq \frac{A_t}{\alpha_t} \defeq \frac{\breg(\widetilde{w}^{\ast};w_t) - \breg(w_{t+1};w_t) - \breg(\widetilde{w}^{\ast};w_{t+1})}{\alpha_t},
\end{align}
a fact which holds from standard mirror descent analysis.\footnote{See \citet[Ch.~4, 6]{bubeck2015a} or \citet[Ch.~6]{orabona2020a} for a highly readable background.} Next, taking expectation over the second term, using (\ref{eqn:grad_unbiased}) and the convexity of $\ssrisk$, we have
\begin{align}
\label{eqn:min_ssrisk_intermediate_3}
\exx\left[ \langle G_t, \widetilde{w}^{\ast}-w_t \rangle + \ssrisk(w_t) \right] = \exx\left[ \langle \nabla\ssrisk(w_t), \widetilde{w}^{\ast}-w_t \rangle + \ssrisk(w_t) \right] \leq \ssrisk(\widetilde{w}^{\ast}).
\end{align}
Finally, to deal with the remaining gradient difference term, note that
\begin{align*}
\|\widehat{G}_t-G_t\| & \leq \frac{d}{\pert} \|U_t\| \loss_t | \spec(\dfun_t(\loss_t)) - \spec(\dfunhat_t(\loss_t)) |\\
& \leq \left(\frac{d\smooth_{\spec}\loss_t}{\pert}\right) \sup_{u \in \RR} | \dfun_t(u) - \dfunhat_t(u) |.
\end{align*}
Write $\exx_t^{\prime}$ to denote taking expectation with respect to $\Z_t^{\prime}$, and for readability, write the distribution function estimation error as $\|\dfun_t - \dfunhat_t\| \defeq \sup_{u \in \RR}|\dfun_t(u) - \dfunhat_t(u)|$. If we take the expectation of the inequality just derived and use Cauchy-Schwarz, we obtain
\begin{align*}
\exx \left[ \langle \widehat{G}_t-G_t, \widetilde{w}^{\ast}-w_t \rangle \right] & \leq \exx \left[ \left(\frac{d\smooth_{\spec}\loss_t}{\pert}\right) \|\widetilde{w}^{\ast}-w_t\| \|\dfun_t - \dfunhat_t\| \right]\\
& = \exx_{[t]} \left[ \left(\frac{d\smooth_{\spec}\loss_t}{\pert}\right) \|\widetilde{w}^{\ast}-w_t\| \|\dfun_t - \dfunhat_t\| \right]\\
& = \left(\frac{d\smooth_{\spec}}{\pert}\right) \exx \left[ \exx_t^{\prime} \left[ \left. \loss_t \|\widetilde{w}^{\ast}-w_t\| \|\dfun_t - \dfunhat_t\| \,\right|\, U_{[t-1]}, Z_{[t]}, \Z_{[t-1]}^{\prime} \right] \right]\\
& = \left(\frac{d\smooth_{\spec}}{\pert}\right) \exx \left[ \loss_t \|\widetilde{w}^{\ast}-w_t\| \exx_t^{\prime} \left[ \left. \|\dfun_t - \dfunhat_t\| \,\right|\, U_{[t-1]}, Z_{[t-1]}, \Z_{[t-1]}^{\prime} \right] \right].
\end{align*}
The above equalities follow from applying the law of total expectation and noting that conditioned on $U_{[t-1]}, Z_{[t-1]}, \Z_{[t-1]}^{\prime}$, $w_t$ is no longer random, and conditioned on $U_{[t-1]}, Z_{[t]}, \Z_{[t-1]}^{\prime}$, $\loss_t$ is no longer random. To clean up this upper bound, first note that
\begin{align*}
\exx_t^{\prime} \left[ \left. \|\dfun_t - \dfunhat_t\| \,\right|\, U_{[t-1]}, Z_{[t-1]}, \Z_{[t-1]}^{\prime} \right] & = \int_{0}^{\infty} \prr\left\{ \left. \|\dfun_t - \dfunhat_t\| > \varepsilon \,\right|\, U_{[t-1]}, Z_{[t-1]}, \Z_{[t-1]}^{\prime}  \right\} \, \dif\varepsilon\\
& \leq 2 \int_{0}^{\infty} \exp\left(-2M\varepsilon^{2}\right) \, \dif\varepsilon\\
& = \sqrt{\frac{\pi}{2M}}.
\end{align*}
The first equality is a basic probability result.\footnote{See \citet{lo2018a} for a lucid elementary background on this fact.} The inequality is just an application of the refined DKW inequality.\footnote{See for example \citet[Thm.~11.6]{kosorok2008EPSP}.} In a similar fashion, using $\diameter$ to bound the diameter of the hypothesis class $\WW$, we have that
\begin{align}
\nonumber
\exx \left[ \langle \widehat{G}_t-G_t, \widetilde{w}^{\ast}-w_t \rangle \right] & \leq \left(\frac{d\smooth_{\spec}}{\pert}\right) \sqrt{\frac{\pi}{2n}} \exx \loss_t \|\widetilde{w}^{\ast}-w_t\|\\
\nonumber
& \leq \left(\frac{d\smooth_{\spec}\diameter}{\pert}\right) \sqrt{\frac{\pi}{2n}} \exx \left[ \exx_\ddist\left[ \loss(w_t;Z) \cond U_{[t-1]}, Z_{[t-1]}, \Z_{[t-1]}^{\prime} \right] \right]\\
\nonumber
& = \left(\frac{d\smooth_{\spec}\diameter}{\pert}\right) \sqrt{\frac{\pi}{2n}} \exx \left[ \risk(w_t) \right]\\
\label{eqn:min_ssrisk_intermediate_4}
& \leq \left(\frac{d\smooth_{\spec}\smooth_{\risk}\diameter}{\pert}\right) \sqrt{\frac{\pi}{2n}}.
\end{align}
The final inequality uses the definition of $\smooth_{\risk}$ and the fact that $\WW \subset \CC$. This covers the first term in (\ref{eqn:min_ssrisk_init}).

\paragraph{Step 5: more intermediate terms}

For the second term in (\ref{eqn:min_ssrisk_init}), we need control of $\exx\|G_t - \widehat{G}_t\|^{2}$ and $\exx\|\nabla\ssrisk(w_t) - G_t\|^{2}$. As a simple bound on the first of these, noting that $\|\dfun_t-\dfunhat_t\| \leq 1$, we have
\begin{align*}
\exx\|G_t - \widehat{G}_t\|^{2} & \leq \left(\frac{d\smooth_{\spec}}{\pert}\right)^{2} \sup_{v \in \CC} \exx_{\ddist}|\loss(v;Z)|^{2}.
\end{align*}
For the remaining term, we have
\begin{align*}
\exx_{\ndist_1,\ddist}\|\nabla\ssrisk(w_t) - G_t\|^{2} & = \left(\frac{d}{\pert}\right)^{2} \exx_{\ndist_1,\ddist}\left[ \left\| \exx_{\ndist_1,\ddist}\left[\sloss(w_t+\pert U;Z)U\right] - \sloss(w_t+\pert U;Z)U \right\|^{2} \right]\\
& \leq \left(\frac{d}{\pert}\right)^{2} \sup_{v \in \CC} \exx_{\ndist_1,\ddist}\left[ \left\| \exx_{\ndist_1,\ddist}\left[\sloss(v;Z)U\right] - \sloss(v;Z)U \right\|^{2} \right].
\end{align*}
The preceding inequality holds because $0 < \pert < 1$ implies $w_t + \pert U \in \CC$ almost surely $[\ndist_1]$. Taking expectation over all elements and using the definitions of $s_1$ and $s_2$, we have
\begin{align}
\label{eqn:min_ssrisk_intermediate_5}
\exx \left[ c\left( \|G_t - \widehat{G}_t\|^{2} + \|\nabla\ssrisk(w_t) - G_t\|^{2} \right) \right] \leq c \left(\frac{d}{\pert}\right)^{2} \left( \left(\smooth_{\spec}s_2\right)^{2} + s_1^{2} \right).
\end{align}

\paragraph{Step 6: cleanup to bound smoothed spectral risk}

To start the cleanup process, taking inequalities (\ref{eqn:min_ssrisk_intermediate_2})--(\ref{eqn:min_ssrisk_intermediate_5}) back to (\ref{eqn:min_ssrisk_init}) and taking expectation, we can immediately deduce
\begin{align*}
\exx\left[ \ssrisk(w_{t+1}) - \ssrisk(\widetilde{w}^{\ast}) \right] & \leq \exx\left[ \frac{A_t}{\alpha_t} + \left( \smooth + \frac{1}{c} \right) \frac{\breg(w_{t+1};w_t)}{\strong} \right]\\
& \qquad\qquad + \left(\frac{d\smooth_{\spec}\smooth_{\risk}\diameter}{\pert}\right) \sqrt{\frac{\pi}{2M}} + c \left(\frac{d}{\pert}\right)^{2}\left( s_1^{2} + (\smooth_{\spec}s_2)^{2} \right).
\end{align*}
For the first term in the preceding inequality, since $A_t$ is composed of a difference of Bregman divergences, note that
\begin{align}
\nonumber
\frac{A_t}{\alpha_t} + \left(\smooth + \frac{1}{c}\right)&\frac{\breg(w_{t+1};w_t)}{\strong} \\
\nonumber
& = \frac{\breg(\widetilde{w}^{\ast};w_t)-\breg(\widetilde{w}^{\ast};w_{t+1})}{\alpha_t} + \breg(w_{t+1};w_t)\left(\frac{1}{\strong}\left(\frac{1}{c}+\smooth\right)-\frac{1}{\alpha_t}\right)\\
\label{eqn:min_ssrisk_cleanup_1}
& = \frac{\breg(\widetilde{w}^{\ast};w_t)-\breg(\widetilde{w}^{\ast};w_{t+1})}{\alpha(c)}.
\end{align}
The last equality holds via the setting of $\alpha_t = \alpha(c) \defeq \strong(\smooth+1/c)^{-1}$ for all $t$, causing the extra term to vanish. Next, leveraging Jensen's inequality and cancelling terms via the telescoping sum, we have
\begin{align*}
\exx&\left[ \ssrisk\left( \frac{1}{T} \sum_{t=1}^{T}w_{t} \right) - \ssrisk(\widetilde{w}^{\ast}) \right]\\
& \leq \exx\left[ \frac{1}{T}\sum_{t=1}^{T} \left( \ssrisk(w_t) - \ssrisk(\widetilde{w}^{\ast}) \right) \right]\\
& \leq \frac{\breg(\widetilde{w}^{\ast};w_1)-\breg(\widetilde{w}^{\ast};w_{T+1})}{T\alpha(c)} + \left(\frac{d\smooth_{\spec}\smooth_{\risk}\diameter}{\pert}\right) \sqrt{\frac{\pi}{2M}} + c\left( s_1^{2} + (\smooth_{\spec}s_2)^{2} \right)\\
& \leq \frac{\diameter_{\Phi}}{T\strong}\left(\smooth + \frac{1}{c}\right) + \left(\frac{d\smooth_{\spec}\smooth_{\risk}\diameter}{\pert}\right) \sqrt{\frac{\pi}{2M}} + c \left(\frac{d}{\pert}\right)^{2}\left( s_1^{2} + (\smooth_{\spec}s_2)^{2} \right).
\end{align*}
Minimizing the preceding upper bound with respect to $c>0$, one sets
\begin{align*}
c = \left(\frac{\pert}{d}\right)\sqrt{\frac{2\diameter_{\Phi} \strong}{T(s_1^{2}+(\smooth_{\spec} s_2)^{2})}}
\end{align*}
and obtains the bound
\begin{align}
\label{eqn:min_ssrisk_final}
\exx\left[ \ssrisk\left(\frac{1}{T} \sum_{t=1}^{T}w_{t}\right) - \ssrisk(\widetilde{w}^{\ast}) \right] \leq \left(\frac{d}{\pert}\right) \sqrt{\frac{2\diameter_{\Phi}(s_1^{2}+(\smooth_{\spec} s_2)^{2})}{T\strong}} + \frac{\smooth\diameter_{\Phi}}{T\strong} + \left(\frac{d\smooth_{\spec}\smooth_{\risk}\diameter}{\pert}\right) \sqrt{\frac{\pi}{2M}}.
\end{align}
Again, we remark that this holds for any fixed choice of $\widetilde{w}^{\ast}$.

\paragraph{Step 7: guarantees in terms of spectral risk}

Using (\ref{eqn:min_ssrisk_final}) we have a bound in expectation on the smoothed spectral risk $\ssrisk$ incurred by the (averaged) learning algorithm (\ref{eqn:smd_actual}), so it remains for us to relate this to the original objective of interest, namely the spectral risk $\srisk$. Denote a minimizer of this objective by $w^{\ast} \in \argmin_{w \in \WW} \srisk(w)$, and now let us fix $\widetilde{w}^{\ast}$ that appears in (\ref{eqn:min_ssrisk_final}) to be optimal in terms of $\ssrisk$, that is, let $\widetilde{w}^{\ast} \in \argmin_{w \in \WW} \ssrisk$ hold. Using this optimality and continuity properties of convex $\srisk$, we see that
\begin{align}
\nonumber
\srisk(\overbar{w}_T)-\srisk(w^{\ast}) & = \left[ \srisk(\overbar{w}_T)-\ssrisk(\overbar{w}_T) \right] + \left[\ssrisk(\overbar{w}_T)-\ssrisk(\widetilde{w}^{\ast})\right] + \left[ \ssrisk(\widetilde{w}^{\ast})-\srisk(w^{\ast}) \right]\\
\nonumber
& \leq 2 \sup_{w \in \WW} \left| \srisk(w) - \ssrisk(w) \right| + \ssrisk(\overbar{w}_T)-\ssrisk(\widetilde{w}^{\ast})\\
\nonumber
& = 2 \sup_{w \in \WW} \left| \exx_{\ndist}(\srisk(w)-\srisk(w + \pert U)) \right| + \ssrisk(\overbar{w}_T)-\ssrisk(\widetilde{w}^{\ast})\\
\label{eqn:min_srisk}
& \leq 2\smooth_{\risk}\pert + \ssrisk(\overbar{w}_T)-\ssrisk(\widetilde{w}^{\ast}).
\end{align}
The first inequality follows due to the optimality of $\widetilde{w}^{\ast}$, which implies $\ssrisk(\widetilde{w}^{\ast}) \leq \ssrisk(w^{\ast})$. The second equality follows from the definition of $\ssrisk$. The last inequality follows from (\ref{eqn:srisk_lipschitz}) and the fact that $\exx_{\ndist}\|U\| \leq 1$. Taking expectation of (\ref{eqn:min_srisk}), a direct application of the bound (\ref{eqn:min_ssrisk_final}) with $\lambda$ set according to (\ref{eqn:ssrisk_smooth}) yields the desired result.
\end{proof}

\subsection{Proofs from section \ref{sec:theory_highprob}}

\begin{proof}[Proof of Theorem \ref{thm:highprob}]
We start by proving inequality (\ref{eqn:highprob_helper}), namely the key validation error bound. After bounding $|\widehat{\risk}_{\spec}^{(j)} - \risk_{\spec}^{(j)}|$ by the two difference terms, the second inequality follows immediately from the definition of the spectral risk and the intermediate quantity $\overbar{\risk}_{\spec}^{(j)}$, using the $\smooth_{\spec}$-Lipschitz property of $\spec$ to get the error in terms of the error between distribution functions.

The next step (leading to (\ref{eqn:highprob_helper})) is comprised of a few parts. First, using H\"{o}lder's inequality, for any $w \in \CC$ we have $\exx_{\ddist}|\loss(w;Z)| \leq \sqrt{\exx_{\ddist}|\loss(w;Z)|^{2}} \leq s_2$, by definition of $s_2$. Next, for any fixed $w$, the DKW inequality \citep[Thm.~11.6]{kosorok2008EPSP} implies
\begin{align*}
\prr\left\{ \sup_{u}|\dfunhat_{w}(u) - \dfun_{w}(u)| > \varepsilon \right\} \leq 2\exp(-2\varepsilon^{2}\lfloor n/(k+1)\rfloor).
\end{align*}
Thus, conditioned on $\overbar{w}^{(j)}$, the bound on the second term in (\ref{eqn:highprob_helper}) holds with probability no less than $1-\delta/2$, over the random draw of the points used to compute the estimate $\dfunhat_{w}$. This is the first ``good event'' of interest.

The second good event is with respect to the remaining data $\{Z_{i}^{\prime\prime}\}$ used to compute the spectral risk estimates. Let us denote the variance of the weighted loss by
\begin{align*}
v_{\spec}^{(j)} \defeq \vaa_{\ddist}\left[ \loss(\overbar{w}^{(j)};Z)\spec(\dfunhat_{\overbar{w}^{(j)}}(\loss(\overbar{w}^{(j)};Z))) \right].
\end{align*}
Conditioning on $\dfunhat_{w}$ and $\overbar{w}^{(j)}$ for the moment, standard concentration inequalities for M-estimators tell us that
\begin{align}\label{eqn:highprob_1}
| \widehat{\risk}_{\spec}^{(j)} - \overbar{\risk}_{\spec}^{(j)} | \leq 2\sqrt{\frac{2v_{\spec}^{(j)}(1+\log(2\delta^{-1}))}{\lfloor n/(k+1)\rfloor}}
\end{align}
holds with probability no less than $1-\delta/2$; see for example \citet{catoni2012a} or \citet{devroye2016a} for typical examples of $\rho$ and $b$ settings. To get a bound free of the elements being conditioned upon, note that the variance of the weighted loss can be bounded as
\begin{align*}
\vaa_{\ddist}\loss(w;Z)\spec(\dfunhat_{w}(\loss(w;Z))) \leq \overbar{\spec}^{2}\exx_{\ddist}|\loss(w;Z)|^{2} \leq \overbar{\spec}^{2}s_{2}^{2} < \infty.
\end{align*}
We can thus bound $v_{\spec}^{(j)} \leq \overbar{\spec}^{2}s_{2}^{2}$ in (\ref{eqn:highprob_1}), and this is our second good event of interest. Taking a union bound of these two ``good events'' (each with probability at least $1-\delta/2$), we obtain (\ref{eqn:highprob_helper}) with probability at least $1-\delta$, as desired.

With inequality (\ref{eqn:highprob_helper}) in hand for each of the sub-processes indexed by $j=1,\ldots,k$, we can combine this with the key learning guarantees in expectation provided by Theorem \ref{thm:smd_srisk}. In particular, we use the excess expected spectral risk bound $\varepsilon_{1}(\cdot)$ in (\ref{eqn:sample_complexity}), but this time passed a sample of size $n/(k+1)$, since that is all that each sub-process (each independent run of Algorithm \ref{algo:smd}) is allocated. The desired result then follows quite mechanically using a generic robust confidence boosting argument, namely plugging in (\ref{eqn:sample_complexity}) and (\ref{eqn:highprob_helper}) to \citep[Lem.~9]{holland2021c}, and setting $k$ as specified allows us to clean up the probabilities \citep[Thm.~4 proof]{holland2021c}, yielding the $1-3\delta$ probability good event given in our theorem statement.
\end{proof}

\section{Additional empirical details}

Our empirical tests have been implemented in Python (v.~3.8) with the following open-source software: matplotlib (v.~3.4.1), PyTables (v.~3.6.1), Jupyter notebook, NumPy (v.~1.20.0), and SciPy (v.~1.6.2, for special functions). See Table \ref{table:datasets} for URLs to online documentation for each of the datasets used in our experiments. As discussed in the main text, we use multi-class logistic regression, with one linear model for each class, so the number of parameters to be determined is the number of classes (e.g., $2$ for \texttt{adult}, $47$ for \texttt{emnist\_balanced}) multiplied by the number of input features (e.g., $105$ for \texttt{adult}, $784$ for \texttt{emnist\_balanced}). Categorical features are given a one-hot representation, and all input features are standardized to take values on the unit interval $[0,1]$.

\begin{table}[t!]
\begin{center}
\begin{tabular}{|l|l|}
\hline
Dataset & URL \\
\hline\hline
\texttt{adult} & \url{https://archive.ics.uci.edu/ml/datasets/Adult}\\
\hline
\texttt{cifar10} & \url{https://www.cs.toronto.edu/~kriz/cifar.html}\\
\hline
\texttt{cod\_rna} & \url{https://www.csie.ntu.edu.tw/~cjlin/libsvmtools/datasets/binary.html}\\
\hline
\texttt{covtype} & \url{https://archive.ics.uci.edu/ml/datasets/covertype}\\
\hline
\texttt{emnist\_balanced} & \url{https://www.nist.gov/itl/products-and-services/emnist-dataset}\\
\hline
\texttt{fashion\_mnist} & \url{https://github.com/zalandoresearch/fashion-mnist}\\
\hline
\texttt{mnist} & \url{http://yann.lecun.com/exdb/mnist/}\\
\hline
\texttt{protein} & \url{https://www.kdd.org/kdd-cup/view/kdd-cup-2004/Data}\\
\hline
\end{tabular}
\end{center}
\caption{Benchmark dataset summary.}
\label{table:datasets}
\end{table}

\bibliographystyle{../refs/apalike}
\bibliography{../refs/refs.bib}

\end{document}